\documentclass[a4paper,fleqn]{cas-sc}

\usepackage{amsmath}
\usepackage{amssymb}
\usepackage{amsthm}
\usepackage{mathtools}
\usepackage{bm}
\usepackage{booktabs}
\usepackage[ruled,vlined]{algorithm2e}

\usepackage{float}
\usepackage{subcaption}
\usepackage{makecell}
\usepackage{enumitem}

\usepackage{xurl}
\usepackage[labelfont=,labelsep=period]{caption}
\newtheorem{lemma}{Lemma}
\newtheorem{proposition}{Proposition}

\usepackage{amsmath,amssymb,amsthm} 
\usepackage{geometry} 
\def\tsc#1{\csdef{#1}{\textsc{\lowercase{#1}}\xspace}}
\tsc{WGM}
\tsc{QE}

\ExplSyntaxOn

\cs_set:Npn \__first_footerline:
{
  \group_begin:
  \small
  \sffamily
  \group_end:
}

\ExplSyntaxOff

\begin{document}

\let\WriteBookmarks\relax


 \def\floatpagepagefraction{1}
 \def\textpagefraction{.001}

\shorttitle{NAViLoss}



\title [mode = title]{NAViLoss: An Underwater Navigation-Aware Dual-Residual Objective for Physics-Consistent Learning}


\author[1]{Arup Kumar Sahoo} [
orcid=0000-0003-4515-7434
] \credit{Conceptualization, Methodology, Software, Validation, Investigation, Writing - original draft}
\cormark[1]
\ead{asahoo@campus.haifa.ac.il}

\author[1]{Itzik Klein} [
orcid=0000-0001-7846-0654
] \credit{Supervision, Methodology, Writing - review \& editing}
\ead{kitzik@univ.haifa.ac.il}

\cortext[1]{Corresponding author}

\affiliation[1]{
organization={The Hatter Department of Marine Technologies,
Leon H. Charney School of Marine Sciences,
University of Haifa},
city={Haifa},
postcode={3498838},
country={Israel}
}

\begin{abstract}
Autonomous underwater vehicles (AUVs) commonly rely on inertial navigation systems (INS) aided by Doppler velocity logs (DVLs) for reliable underwater navigation. Accurate DVL velocity estimation is therefore essential for successful operation. Recent learning-based methods have demonstrated improved DVL velocity estimation, particularly under degraded measurement conditions. However, their training objectives typically rely on conventional regression losses that are highly sensitive to large residuals and corrupted observations. Additionally, they do not explicitly account for the physical consistency and measurement uncertainty associated with the underlying sensing process. To address these limitations, this paper introduces navigation-aware loss (NAViLoss), a robust and uncertainty-aware objective function for learning-based AUV velocity estimation. NAViLoss jointly penalizes the velocity-estimation residual in the navigation-state domain and the
beam-consistency residual in the DVL measurement domain. Its bounded
formulation limits the influence of large residuals, while an adaptive
 mechanism regulates the uncertainty in beam geometry.
Furthermore, NAViLoss is integrated with a DeepONet architecture to form a novel NAVi-DeepONet model for seamless estimation of an underwater vehicle's velocity.
Lastly, our model is evaluated
using approximately 10,000m of semi-synthetic
AUV experimental data collected during multiple
real-world sea trials. 
Experimental results demonstrate a 44\% improvement in velocity-estimation accuracy compared with conventional and learning-based baselines. These results demonstrate the effectiveness of navigation-aware and uncertainty-adaptive loss design for robust learning-based underwater velocity estimation.

\end{abstract}




\begin{keywords}
   DeepONet \sep Sensor fusion \sep Loss function \sep Inertial sensors \sep Doppler velocity log  \sep Underwater navigation  \sep   Physics-informed learning \sep Autonomous underwater vehicles
\end{keywords}
\maketitle
\section{Introduction}

Autonomous underwater vehicles (AUVs) have been increasingly deployed for maritime research, seafloor mapping, offshore resource exploration, environmental monitoring, and subsea infrastructure inspection~\cite{ridao2010visual,zhang2022submarine,  tiwari2025multi, tziourtzioumis2026design}. Reliable estimation of their position, velocity, and orientation is fundamental to the successful execution of these operations.
AUVs predominantly operate in underwater environments and have no access to global navigation satellite system (GNSS) signals and external aids. Therefore, they typically rely on inertial navigation systems and Doppler velocity logs (DVL)~\cite{wadoo2017autonomous,zhang2023autonomous, cohen2025adaptive} for accurate navigation. An INS continuously propagates the navigation solution from accelerometer and gyroscope measurements, but measurement noise and
sensor biases lead to accumulated errors and increase drift over time~\cite{groves2013book,titterton2004strapdown,farrell2007gnss}. DVLs are therefore commonly employed to constrain this drift by providing independent velocity information. 
 In bottom-lock mode, four acoustic beams are directed toward
the seabed, and their Doppler frequency shifts are used to determine the
vehicle velocity~\cite{yona2024missbeamnet,damari2026resalignnet}.
The resulting beam measurements are related to the vehicle
velocity through a known DVL observation geometry.

Despite the importance of DVL measurements in underwater navigation, they are susceptible to degradation. Acoustic interference, environmental conditions, vehicle motion, limited bottom visibility, and partial beam availability can introduce corrupted measurements or result in the loss of one or more DVL beams. Such degradation directly affects the quality of the velocity information available for navigation. In particular, missing beams alter the measurement geometry and reduce the information available for estimating the three-dimensional vehicle velocity. Over the years, various model-based approaches have been investigated to improve DVL-based velocity estimation, including least-squares (LS) estimation, loosely and tightly coupled integration, factor-graph optimization, and information-aided estimation methods~\cite{wang2019novel,engelsman2023information,cheng2026robust}. Nevertheless, robust velocity estimation under noisy and incomplete DVL measurements remains an important challenge for reliable AUV navigation.

Recent advances in deep learning, computational power, and open-source libraries have substantially improved the adoption of learning-based (LR-based) technologies to address these limitations.
LR-based methods have been increasingly investigated for underwater velocity estimation, sensor fusion, DVL beam reconstruction, and navigation under degraded measurement conditions~\cite{cohen2022beamsnet,SAHOO2026127766,kang2026hybrid,mo2026hybrid}.
In a pioneer research, DCNet~\cite{yampolsky2025dcnet} employed convolutional neural networks (CNN) to improve DVL calibration, whereas UDON~\cite{zhang2025underwater} applied a data-driven framework for DVL-based underwater odometry. In recent research, a physics-guided  PiDR~\cite{sahoo2026pidr} model was introduced for AUV navigation using inertial measurements. Building on the same concept, A-LUKF~\cite{miao2026physics}, a physics-guided filtering model, was proposed for AUV navigation in degraded underwater environments.
In the domain of multi-sensor fusion, DMIAN~\cite{batovs2026dmian} offered a
LR-based inertial/DVL fusion through navigation error-state
estimation. ResAlignNet~\cite{damari2026resalignnet} further addressed inertial/DVL sensor alignment by estimating the transformation matrix
between the two sensing modalities. These approaches can learn complex relationships between inertial/DVL measurements and have demonstrated improved estimation capability compared with conventional model-based approaches under nominal operating conditions.

While considerable attention has been devoted to developing advanced LR-based architectures, comparatively less attention has been given to the objective functions used to train them. The choice of the loss function influences the optimization behavior, robustness, and generalization capability of the learned model. Existing LR-based approaches are typically optimized
using conventional regression objectives such as mean squared error (MSE), mean absolute error (MAE), Huber loss,
or weighted variants. 
Among these, the MSE remains particularly popular owing to its minimal complexity and computational efficiency. However, its quadratic penalization makes it highly sensitive to
large residuals and corrupted observations. The MAE reduces the influence of large errors through linear penalization, whereas
Huber loss, Smooth-$L_1$, log-cosh, and Charbonnier losses provide intermediate
behavior between quadratic and linear penalties~\cite{wang2022comprehensive, terven2025comprehensive}. More robust
objectives, including the Cauchy, Welsch~\cite{holland1977robust}, Geman--McClure~\cite{chen2024fracgm}, Tukey biweight~\cite{mosteller1977data},
Barron's adaptive robust loss~\cite{barron2019general}, and RoBoSS loss~\cite{akhtar2024roboss}, further reduce the influence
of large residuals through bounded or slowly increasing penalties.
Despite their robustness characteristics, these objectives primarily operate in the prediction
space. They do not explicitly exploit the physical observation model that
relates the predicted state to the measured state.

Physics-informed learning provides a natural means of bridging this gap by incorporating prior physical knowledge into neural-network optimization~\cite{raissi2019physics,chakraverty2025artificial,saini2026pinn}. For DVL-based navigation, the known geometric relationship between the three-dimensional vehicle velocity and the measured beam velocities provides an additional source of physical information during learning. A physics-informed learning objective can therefore consider not only the error between the predicted and reference velocities but also the consistency of the prediction with the observed DVL beams. However, simply combining these residuals using conventional quadratic penalties~\cite{SAHOO2026127766} does not account for changes in measurement uncertainty. 
Additionally, these objectives generally assume complete observations or well-defined physical constraints. 
This limitation becomes particularly important under partial beam availability, where the information provided by the DVL measurements depends on the available beam measurement geometry. Hence, a navigation-oriented learning objective should simultaneously account for prediction accuracy, measurement consistency, robustness to large residuals, and variations in measurement uncertainty.

To address these limitations, this paper introduces \textbf{NAViLoss}, a
navigation-aware robust loss function for LR-based AUV velocity estimation. NAViLoss jointly penalizes residuals in the navigation-state space and the DVL beam-measurement
space.  Its bounded formulation limits the influence of large residuals, while an uncertainty-aware mechanism adapts the contribution of the beam-domain information according to the available DVL measurement geometry.
The proposed objective therefore integrates prediction accuracy, DVL
measurement consistency, robustness, and uncertainty awareness within a
unified optimization framework. In addition to the proposed formulation, we establish the fundamental analytical
properties of NAViLoss and characterize its behavior under both small
and large residual regimes. The proposed loss is further integrated with a DeepONet backbone to form \textbf{NAVi--DeepONet}, enabling robust AUV velocity estimation.

The main contributions of this work are summarized as follows:

\begin{enumerate}

\item A novel navigation-aware objective function, the NAViLoss has been introduced.

\item The theoretical aspects of NAViLoss are proven to hold crucial properties such as: 
vanishing influence for large residuals, quadratic behaviour for small residuals, uncertainty-aware attenuation, and
dual-domain consistency.

\item Fusion of NAViLoss into the uniquely designed DeepONet algorithms to introduce a novel physics-guided NAVi-DeepONet model with two variants.

\item To support the reproduction of results and encourage future research and benchmarking, our codebase has been made publicly available on \url{https://github.com/ansfl/NAViLoss}.
\end{enumerate}

The proposed NAVi-DeepONet framework is evaluated on semi-synthetic AUV data derived
from real-world sea trials conducted in the Mediterranean Sea, Israel,
covering approximately 10,000m of traveled distance. Comprehensive
experiments are performed under noisy sensing, DVL beam outages, and
reduced measurement availability. The experimental results demonstrate
that the proposed framework maintains accurate and robust velocity
estimation across these challenging operating conditions, achieving an
average improvement of 44\% over the considered baseline methods.

The remainder of this paper is organized as follows.
Section~\ref{sec:problem} presents the problem formulation.
Section~\ref{sec:naviloss} introduces the proposed NAViLoss function and
NAVi--DeepONet framework.
Section~\ref{sec:4} discusses the experimental evaluation.
Finally, Section~\ref{sec:conclusion} summarizes the main findings,
discusses the limitations of the proposed approach, and concludes the paper.

\begin{figure*}
    \begin{center}
        \includegraphics[width=\textwidth]{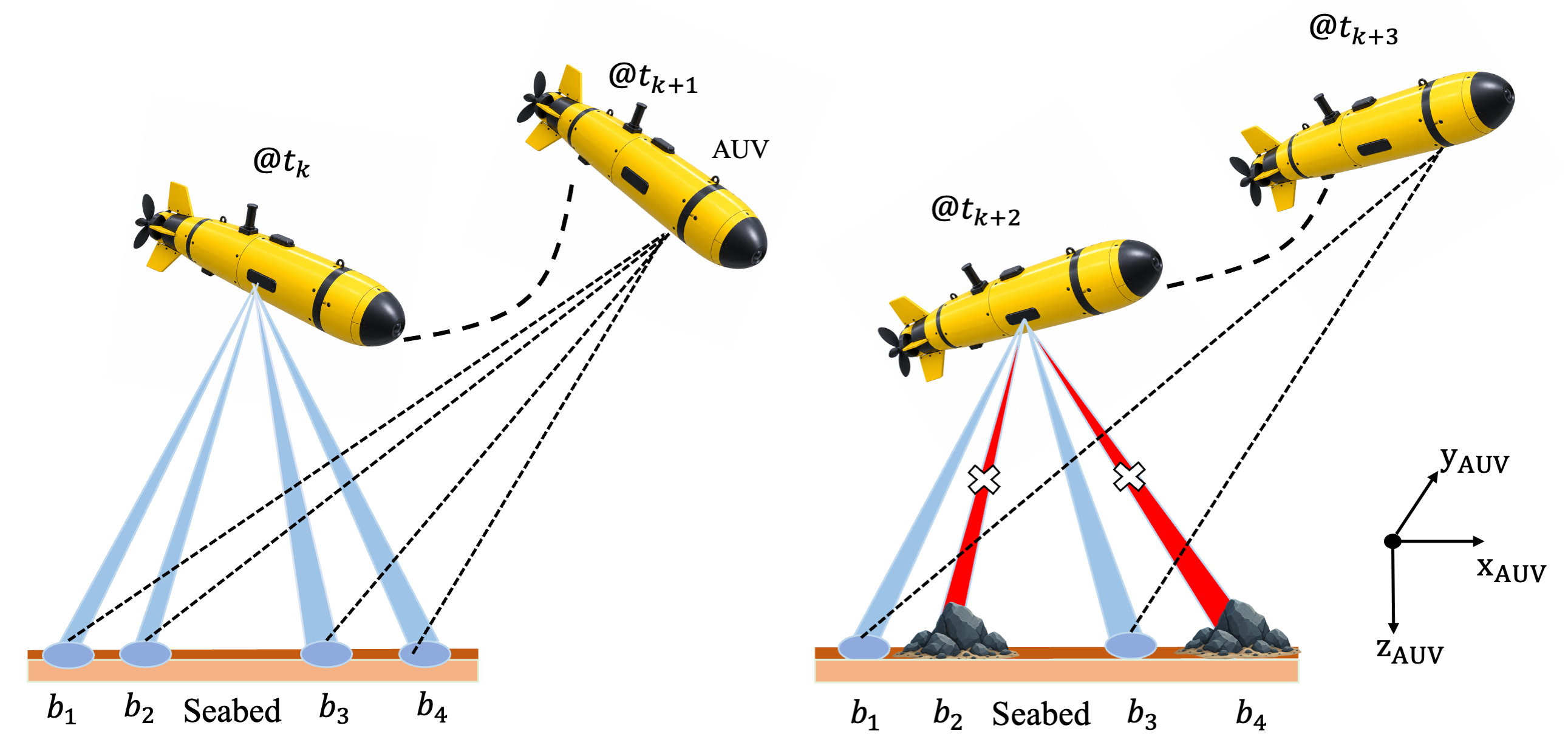}
       \caption{Illustration of an AUV maneuvering with a DVL under full and
partial beam availability. The red beams indicate DVL measurements
obstructed by underwater obstacles, while the remaining beams represent
valid bottom-lock measurements during vehicle motion.}
         \label{fig:1}
    \end{center} 
\end{figure*}


\section{Problem Formulation }
\label{sec:problem}
A DVL consists of four acoustic transducers typically arranged in a Janus (X) configuration. As illustrated in
Figure~\ref{fig:1}, each transducer directs an acoustic beam toward the
seafloor and determines the corresponding Doppler frequency shift from
the reflected signal~\cite{brokloff1994matrix}. Under nominal operating
conditions, all four beams provide valid bottom-lock measurements.
However, underwater obstacles 
may obstruct one or more acoustic beams, resulting in partial DVL beam
availability, as illustrated in Figure~\ref{fig:1}.
Consequently, model-based algorithms may fail or degrade the navigation solutions~\cite{SAHOO2026127766, cohen2024seamless}. 
These challenges motivate the development of robust LR-based approaches to estimate the AUV velocity under different conditions.

Let
\(
\hat{\mathbf v},\mathbf v\in\mathbb{R}^{3}
\)
denote the predicted and reference vehicle velocity vectors, respectively.
The velocity residual is defined as
\begin{equation}
\mathbf r_v=\hat{\mathbf v}-\mathbf v.
\label{eq:velocity_residual}
\end{equation}
Conventional supervised regression primarily minimizes the velocity
residual, typically through an MSE objective such as
\begin{equation}
\mathcal L_{\mathrm{MSE}}
=
\frac{1}{N}
\sum_{i=1}^{N}
\left\|
\mathbf r_{v,i}
\right\|_2^2,
\label{eq:mse1}
\end{equation}

\noindent where $N$ denotes the total number of samples.
Although \eqref{eq:mse1} directly measures velocity-estimation accuracy,
it does not explicitly account for physical consistency between the
predicted velocity and the underlying DVL measurements. Moreover, its
quadratic structure causes large residuals arising from corrupted
measurements or sensor outliers to dominate the optimization process.

Let
\(
\hat{\mathbf b},\mathbf b\in\mathbb{R}^{m}
\)
denote the predicted and measured DVL beam velocities, respectively,
where \(m\leq4\) is the number of available DVL beams. The beam-space
residual is defined as:
\begin{equation}
\mathbf r_b=\hat{\mathbf b}-\mathbf b.
\label{eq:beam_residual}
\end{equation}

\noindent
The predicted beam velocity is obtained from the DVL observation model 
\begin{equation}
\hat{\mathbf b}=H\hat{\mathbf v},
\label{eq:predicted_dvl_model}
\end{equation}
where \(H\in\mathbb{R}^{m\times3}\) denotes the DVL beam geometry matrix.
Thus, the DVL observation model provides an additional physical
relationship between the predicted vehicle velocity and the available
beam measurements. Accordingly, the conventional physics-learning approach can be written as:

\begin{equation} \label{eq:phy_loss}
\mathcal L= \lambda_v
    \left\|\mathbf r_v\right\|_2^2
+
\lambda_b
\left\|\mathbf r_b\right\|_2^2,
\end{equation}

\noindent where $\lambda_v$ and $\lambda_b$ are weight coefficients of velocity and beam residuals, respectively.

Although \eqref{eq:phy_loss} introduces measurement-space consistency, it also remains sensitive to large residuals and assigns a fixed relative importance to the beam-space. However, the physics of the DVL measurements depends on the available beam geometry. In particular, under corrupted or partially available DVL measurements, the confidence associated with the beam-space information may vary considerably from the full-beam configuration. Consequently, simply combining the two residuals as in ~\eqref{eq:phy_loss} does not provide a robust mechanism for balancing prediction accuracy, measurement consistency, and uncertainty.


\section{Proposed Approach} \label{sec:naviloss}

This section presents NAViLoss, the proposed navigation-aware objective function, and NAVi--DeepONet, the corresponding physics-guided learning framework. The following subsections introduce the NAViLoss formulation and establish its analytical properties, followed by the NAVi--DeepONet architecture, DVL operating configurations, and training procedure.

\subsection{Motivation and Overview} 

Conventional regression objectives such as \eqref{eq:mse1} primarily minimize prediction errors without explicitly incorporating the underlying system physics. Although the physics-guided objective in \eqref{eq:phy_loss} introduces DVL measurement consistency, it does not explicitly account for the uncertainty associated with the available navigation measurements. This limitation becomes particularly important under noisy or missing DVL beam conditions, where the information provided by the available DVL geometry can vary considerably.
Accordingly, a navigation-aware objective should
(1) minimize the velocity-estimation residual,
(2) preserve physical consistency with the available DVL measurements,
(3) account for variations in measurement uncertainty,
(4) limit the influence of large residuals, and
(5) remain smooth and continuously differentiable for optimization.

To address these requirements, we propose NAViLoss, a robust and navigation-aware objective that jointly incorporates the velocity and DVL beam-domain residuals. The proposed loss employs a bounded penalty to limit the influence of large residuals and an adaptive mechanism to deal with uncertainty of the beam-domain information according to the available DVL geometry.
Furthermore, NAViLoss is integrated with a DeepONet backbone to form NAVi-DeepONet for AUV velocity estimation.


\subsection{NAViLoss Function}

Based on the velocity and beam residuals defined in
\eqref{eq:velocity_residual} and \eqref{eq:beam_residual},
respectively, the navigation residual space is defined as:
\begin{equation}
\mathcal{R}
=
\mathcal{R}_{v}\times\mathcal{R}_{b},
\label{eq:nav_residual_space}
\end{equation}
where
\(\mathcal{R}_v=\mathbb{R}^{3}\) and
\(\mathcal{R}_b=\mathbb{R}^{m}\)
denote the velocity and beam-residual spaces, respectively.

Let
\(
\sigma_v>0
\)
and
\(
\sigma_b>0
\)
denote the uncertainty measures associated with the velocity residual
\(\mathbf r_v\) and the DVL beam residual \(\mathbf r_b\), respectively.
The corresponding navigation uncertainty vector is defined as
\begin{equation}
\boldsymbol{\sigma}
=
(\sigma_v,\sigma_b),
\end{equation}
with
\begin{equation}
\boldsymbol{\sigma}\in\mathcal U,
\qquad
\mathcal U
=
\mathbb R_{>0}\times\mathbb R_{>0},
\end{equation}

\noindent and $\mathcal U$ is the uncertainty space.
A smaller uncertainty value indicates higher confidence in the corresponding navigation information, whereas a larger value of $\boldsymbol{\sigma}$, represents lower confidence and consequently reduces the contribution of the associated residual during optimization.

Together, the navigation residual space $\mathcal R$ and the uncertainty
space $\mathcal U$ define the input domain of the proposed NAViLoss, expressed as the mapping
\begin{equation}
\mathcal L_{\mathrm{NAVi}}
:
\mathcal R\times\mathcal U
\rightarrow
\mathbb R_{\ge0}.
\label{eq:naviloss_mapping}
\end{equation}

\noindent Specifically, NAViLoss is defined as:

\begin{equation}
\mathcal L_{\mathrm{NAVi}}
=
\mathcal L_v
+
\lambda_b
\mathcal L_b,
\label{eq:naviloss_total}
\end{equation}
where

\begin{equation}
\mathcal L_v
=
\frac{1}{\lambda}
\left(
1-
\frac{1}
{
1+
\lambda
\dfrac{\|\mathbf r_v\|^2}
{\sigma_v^2+\varepsilon}
}
\right),
\label{eq:velocity_naviloss}
\end{equation}
and
\begin{equation}
\mathcal L_b
=
\frac{1}{\lambda}
\left(
1-
\frac{1}
{
1+
\lambda
\dfrac{\|\mathbf r_b\|^2}
{\sigma_b^2+\varepsilon}
}
\right).
\label{eq:beam_naviloss}
\end{equation}

\noindent
Here, $\mathcal L_v$ is the velocity loss, $\mathcal L_b$ is the uncertainty-aware 
beam-consistency physics loss,
$\mathbf r_v\in\mathcal R_v$ is the velocity residual,
$\mathbf r_b\in\mathcal R_b$ is the beam-consistency residual,
$\sigma_v\in\mathcal U_v$ is the velocity uncertainty,
$\sigma_b\in\mathcal U_b$ is the beam uncertainty,
$\lambda>0$ is the robustness parameter, and
$\lambda_b>0$ is the beam-loss weighting coefficient. 
Thus, NAViLoss jointly minimizes the state-estimation error and the measurement-consistency error while accounting for their respective uncertainty levels within a unified navigation-aware optimization framework. The next section proves the theoretical properties of NAViLoss.



\subsection{Theoretical Properties of NAViLoss}

This section analyzes the fundamental mathematical properties of NAViLoss to characterize its behavior with respect to the navigation residuals and their associated uncertainties.

\begin{proposition}[Elementary Properties]
\label{prop:naviloss_properties}

Let $\mathcal{L}_{\mathrm{NAVi}}$ be defined as in
\eqref{eq:naviloss_total}--\eqref{eq:beam_naviloss}, with
$\lambda>0$, $\lambda_b>0$, $\varepsilon>0$,
$\sigma_v>0$, and $\sigma_b>0$.
Then $\mathcal{L}_{\mathrm{NAVi}}$ is non-negative, bounded, and smooth (continuous and infinitely differentiable)
with respect to the residuals $\mathbf r_v$ and $\mathbf r_b$. In
particular,
\begin{equation}
\mathcal{L}_{\mathrm{NAVi}}\in C^\infty
\end{equation}
with respect to $(\mathbf r_v,\mathbf r_b)$. Moreover,
\begin{equation}
0
\leq
\mathcal{L}_{\mathrm{NAVi}}
<
\frac{1+\lambda_b}{\lambda},
\label{eq:naviloss_bound}
\end{equation}
and
\begin{equation}
\lim_{\substack{\|\mathbf r_v\|\rightarrow\infty\\
                 \|\mathbf r_b\|\rightarrow\infty}}
\mathcal{L}_{\mathrm{NAVi}}
=
\frac{1+\lambda_b}{\lambda}.
\label{eq:naviloss_limit}
\end{equation}
\end{proposition}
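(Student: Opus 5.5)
The plan is to reduce everything to a single scalar profile. Define $\phi:[0,\infty)\to\mathbb R$ by
\[
\phi(s)=\frac{1}{\lambda}\left(1-\frac{1}{1+\lambda s}\right)=\frac{s}{1+\lambda s}.
\]
Setting $s_v=\|\mathbf r_v\|^2/(\sigma_v^2+\varepsilon)$ and $s_b=\|\mathbf r_b\|^2/(\sigma_b^2+\varepsilon)$, we have $\mathcal L_v=\phi(s_v)$ and $\mathcal L_b=\phi(s_b)$. Hence
\[
\mathcal L_{\mathrm{NAVi}}=\phi(s_v)+\lambda_b\phi(s_b).
\]
Every claim of Proposition~\ref{prop:naviloss_properties} then follows from properties of $\phi$ together with the positivity of $\lambda_b$.

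\textbf{Non-negativity and the bound.} For $s\ge 0$ and $\lambda>0$ we have $1+\lambda s\ge 1$. Therefore $0<\frac{1}{1+\lambda s}\le 1$, which gives
\[
0\le\phi(s)<\frac{1}{\lambda}.
\]
The lower bound is an equality exactly when $s=0$. The upper bound is strict because $1/(1+\lambda s)$ is strictly positive. Since $\lambda_b>0$, adding $\phi(s_v)$ and $\lambda_b\phi(s_b)$ gives $0\le\mathcal L_{\mathrm{NAVi}}<(1+\lambda_b)/\lambda$, which is \eqref{eq:naviloss_bound}.

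\textbf{Smoothness.} The maps $\mathbf r_v\mapsto s_v$ and $\mathbf r_b\mapsto s_b$ are polynomials, namely squared Euclidean norms scaled by the positive constants $1/(\sigma_\cdot^2+\varepsilon)$. They are therefore $C^\infty$ on $\mathbb R^3$ and $\mathbb R^m$. The function $\phi$ is a rational function whose denominator $1+\lambda s$ vanishes only at $s=-1/\lambda<0$. So $\phi$ is $C^\infty$ on the open interval $(-1/\lambda,\infty)$, which contains the range $[0,\infty)$ of $s_v$ and $s_b$.

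It matters that we compose with the squared norm and never with the norm itself. Because of this, there is no nondifferentiability at the origin. Composition and linear combination preserve $C^\infty$, so $\mathcal L_{\mathrm{NAVi}}\in C^\infty$ jointly in $(\mathbf r_v,\mathbf r_b)$, as a function on $\mathbb R^3\times\mathbb R^m$.

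\textbf{The limit.} Write
\[
\frac{1}{\lambda}-\phi(s)=\frac{1}{\lambda(1+\lambda s)},
\]
which tends to $0$ as $s\to\infty$. If $\|\mathbf r_v\|\to\infty$, then $s_v\to\infty$, since $\sigma_v^2+\varepsilon$ is a fixed positive constant; likewise $s_b\to\infty$. Hence
\[
\frac{1+\lambda_b}{\lambda}-\mathcal L_{\mathrm{NAVi}}=\frac{1}{\lambda(1+\lambda s_v)}+\frac{\lambda_b}{\lambda(1+\lambda s_b)}\to 0,
\]
which is \eqref{eq:naviloss_limit}.

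The one point needing care is how to read the joint limit. I would state it as: for every $\eta>0$ there is an $R$ such that $\|\mathbf r_v\|>R$ and $\|\mathbf r_b\|>R$ imply
\[
\left|\mathcal L_{\mathrm{NAVi}}-\frac{1+\lambda_b}{\lambda}\right|<\eta.
\]
This holds because the displayed difference is bounded by
\[
\frac{1+\lambda_b}{\lambda(1+\lambda R^2/c)},\qquad c=\max(\sigma_v^2,\sigma_b^2)+\varepsilon.
\]
Under this reading, the limit is independent of the path along which the two residual norms grow. That settles the only potential ambiguity; the rest is routine.
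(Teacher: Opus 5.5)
Your proof is correct and follows essentially the same route as the paper's. Both arguments use the componentwise bounds $0\le\mathcal L_v,\mathcal L_b<1/\lambda$ summed with weight $\lambda_b>0$, smoothness from the strictly positive denominators, and the componentwise limits $\mathcal L_v,\mathcal L_b\to 1/\lambda$. Your additions make explicit what the paper leaves implicit: the rewriting $\phi(s)=s/(1+\lambda s)$, the remark that composing with the squared norm avoids any nondifferentiability at the origin, and the uniform bound $(1+\lambda_b)/\bigl(\lambda(1+\lambda R^2/c)\bigr)$ that fixes the meaning of the joint limit.
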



\begin{lemma}[Bounded Influence of Large Residuals]
\label{lem:robustness}

Let
\begin{equation}
\rho(r)
=
\frac{1}{\lambda}
\left(
1-
\frac{1}
{1+\lambda\dfrac{r^2}{\sigma^2+\varepsilon}}
\right),
\end{equation}
where $\lambda>0$, $\varepsilon>0$, and $\sigma>0$. Then,
\begin{equation}
\lim_{|r|\to\infty}\rho(r)
=
\frac{1}{\lambda},
\qquad
\lim_{|r|\to\infty}\rho'(r)
=
0.
\end{equation}
Hence, the penalty remains bounded while the influence of increasingly
large residuals vanishes. This redescending behavior limits the
contribution of large residuals to the optimization process, providing
robustness against outliers and corrupted measurements. Consequently,
large residuals arising from corrupted DVL measurements or sensor
outliers cannot dominate the training objective.
\end{lemma}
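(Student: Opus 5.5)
The plan is to rewrite $\rho$ in a closed rational form and then read off both limits directly. Set $c=\lambda/(\sigma^2+\varepsilon)>0$. This constant is well defined and strictly positive because $\sigma>0$, $\varepsilon>0$ and $\lambda>0$. Simplifying the bracket gives
\begin{equation*}
\rho(r)=\frac{1}{\lambda}\cdot\frac{c\,r^2}{1+c\,r^2}=\frac{1}{\lambda}\left(1-\frac{1}{1+c\,r^2}\right).
\end{equation*}
This is a smooth even function of $r$, since the denominator $1+c\,r^2\ge 1$ never vanishes. This is consistent with the smoothness already asserted in Proposition~\ref{prop:naviloss_properties}.

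For the first limit, as $|r|\to\infty$ we have $c\,r^2\to\infty$, so $1/(1+c\,r^2)\to 0$. Hence $\rho(r)\to 1/\lambda$. The same computation gives the sharper statement $0\le\rho(r)<1/\lambda$ for all $r$, which is the bounded-penalty part of the claim.

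For the influence function, differentiate by the chain rule:
\begin{equation*}
\rho'(r)=\frac{1}{\lambda}\cdot\frac{2c\,r}{(1+c\,r^2)^2}=\frac{2r}{(\sigma^2+\varepsilon)(1+c\,r^2)^2}.
\end{equation*}
Then bound its magnitude. For $|r|\ge 1$ we have $(1+c\,r^2)^2\ge c^2 r^4$, so
\begin{equation*}
|\rho'(r)|\le \frac{2}{\lambda c}\,|r|^{-3}\to 0 .
\end{equation*}
This yields $\lim_{|r|\to\infty}\rho'(r)=0$, with decay of order $|r|^{-3}$.

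The redescending interpretation could also be noted. $\rho'$ vanishes at $0$, is positive for $r>0$, and attains its maximum magnitude at $|r|=1/\sqrt{3c}$, after which it decreases monotonically to $0$. I would include this only as a remark, by setting the derivative of $r/(1+c\,r^2)^2$ to zero.

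There is no real obstacle. The only care needed is to keep the constant $c$ strictly positive, which is exactly where the hypotheses $\sigma>0$, $\varepsilon>0$ and $\lambda>0$ are used, and to handle $r\to\pm\infty$ simultaneously via $|r|$.
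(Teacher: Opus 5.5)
Your proof is correct and follows essentially the same route as the paper's: both obtain $\rho\to 1/\lambda$ from $1/(1+cr^2)\to 0$ with $c=\lambda/(\sigma^2+\varepsilon)$, and both compute the same derivative $\rho'(r)=2r/\bigl[(\sigma^2+\varepsilon)(1+cr^2)^2\bigr]$. Your explicit bound $|\rho'(r)|\le 2(\sigma^2+\varepsilon)\lambda^{-2}|r|^{-3}$ makes rigorous the paper's informal ``linear numerator versus quartic denominator'' comparison; it holds for every $r\neq 0$, so your restriction $|r|\ge 1$ is not needed.
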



\begin{lemma}[Quadratic Behavior for Small Residuals]
\label{lem:small_residual}

Let
\begin{equation}
\rho(r)
=
\frac{1}{\lambda}
\left(
1-
\frac{1}
{1+\lambda\dfrac{r^2}{\sigma^2+\varepsilon}}
\right),
\end{equation}
where $\lambda>0$, $\sigma>0$, and $\varepsilon>0$. Then,
\begin{equation}
\rho(r)
=
\frac{r^2}{\sigma^2+\varepsilon}
+
O(r^4),
\qquad r\rightarrow0.
\end{equation}
Hence, the NAViLoss exhibits locally quadratic behavior for small residuals.
\end{lemma}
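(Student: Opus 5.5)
Set $x=\lambda r^2/(\sigma^2+\varepsilon)\ge 0$, so that
\[
\rho(r)=\frac{1}{\lambda}\Bigl(1-\frac{1}{1+x}\Bigr)=\frac{1}{\lambda}\,\frac{x}{1+x}.
\]
This closed form is the starting point. The plan is to expand the rational function $x/(1+x)$ around $x=0$ and then substitute back in terms of $r$.

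First, for $0\le x<1$ the geometric series gives
\[
\frac{x}{1+x}=x-x^2+x^3-\cdots .
\]
To avoid relying on convergence of the series, I will use the exact identity
\[
\frac{x}{1+x}=x-\frac{x^2}{1+x},
\]
valid for all $x\ge0$. Dividing by $\lambda$ and substituting $x$ back yields
\[
\rho(r)=\frac{r^2}{\sigma^2+\varepsilon}-\frac{1}{\lambda}\,\frac{x^2}{1+x}.
\]

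Second, I will bound the remainder. Since $1+x\ge1$,
\[
0\le\frac{1}{\lambda}\,\frac{x^2}{1+x}\le\frac{x^2}{\lambda}=\frac{\lambda\,r^4}{(\sigma^2+\varepsilon)^2}.
\]
This shows the remainder is $O(r^4)$ as $r\to0$, with the explicit constant $\lambda/(\sigma^2+\varepsilon)^2$. In fact the bound holds for every $r$, not only near zero, which is slightly stronger than the claim. It also shows that $\rho(r)$ never exceeds its quadratic part $r^2/(\sigma^2+\varepsilon)$.

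There is no genuine obstacle here. The only point needing care is to state the $O(r^4)$ bound uniformly rather than appeal to the series expansion, and the exact identity above handles this.

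As a remark, the same computation gives $\rho'(0)=0$ and $\rho''(0)=2/(\sigma^2+\varepsilon)$, consistent with local quadratic behavior. Applied with $(\sigma_v,\mathbf r_v)$ and $(\sigma_b,\mathbf r_b)$, with $r^2$ replaced by $\|\mathbf r\|^2$, it also shows that $\mathcal L_{\mathrm{NAVi}}$ reduces near zero residual to the uncertainty-weighted quadratic loss \eqref{eq:phy_loss}, with weights $1/(\sigma_v^2+\varepsilon)$ and $\lambda_b/(\sigma_b^2+\varepsilon)$.
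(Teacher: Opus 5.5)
Your proof is correct and follows the same route as the paper: substitute $x=\lambda r^2/(\sigma^2+\varepsilon)$ and expand around $x=0$. The only difference is that the paper truncates the geometric series $1/(1+x)=1-x+x^2+O(x^3)$, whereas you use the exact identity $x/(1+x)=x-x^2/(1+x)$ to get the explicit bound $0\le \frac{r^2}{\sigma^2+\varepsilon}-\rho(r)\le \frac{\lambda r^4}{(\sigma^2+\varepsilon)^2}$ for all $r$; your constant equals the magnitude of the paper's $r^4$ coefficient, so the bound is sharp as $r\to0$ and avoids any appeal to series convergence.
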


\begin{lemma}[Uncertainty-Aware Attenuation]
\label{lem:uncertainty_attenuation}

For any fixed nonzero residual $r$ and $\sigma>0$, the penalty
$\rho(r;\sigma)$ is strictly decreasing with respect to $\sigma$, i.e.,
\begin{equation}
\frac{\partial \rho(r;\sigma)}{\partial \sigma}<0.
\end{equation}
Hence, residuals associated with larger uncertainty receive a smaller
penalty contribution.
\end{lemma}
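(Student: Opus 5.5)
The plan is a direct differentiation in $\sigma$ using the chain rule through the single scalar quantity on which $\rho$ depends. I will write $\rho(r;\sigma)=g(u)$ with
\[
g(u)=\frac{1}{\lambda}\left(1-\frac{1}{1+\lambda u}\right)=\frac{u}{1+\lambda u},
\qquad
u(\sigma)=\frac{r^2}{\sigma^2+\varepsilon}.
\]
The simplified form $g(u)=u/(1+\lambda u)$ comes from putting the bracket over a common denominator. This is the same reduction implicitly underlying Lemma~\ref{lem:small_residual}.

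\textbf{Step 1: monotonicity of $g$.} For $u\ge 0$ and $\lambda>0$,
\[
g'(u)=\frac{1}{(1+\lambda u)^2}>0,
\]
so $g$ is strictly increasing on $[0,\infty)$.

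\textbf{Step 2: monotonicity of $u$ in $\sigma$.} For fixed nonzero $r$ we have $r^2>0$. Since $\sigma>0$ and $\varepsilon>0$, the denominator satisfies $\sigma^2+\varepsilon>0$. Differentiating gives
\[
\frac{\partial u}{\partial\sigma}=-\frac{2\sigma r^2}{(\sigma^2+\varepsilon)^2}<0.
\]

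\textbf{Step 3: combine via the chain rule.} Multiplying the two derivatives,
\[
\frac{\partial\rho}{\partial\sigma}=g'(u)\,\frac{\partial u}{\partial\sigma}=-\frac{2\sigma r^2}{(\sigma^2+\varepsilon)^2\,(1+\lambda u)^2}.
\]
Clearing the nested fraction, this equals $-2\sigma r^2/(\sigma^2+\varepsilon+\lambda r^2)^2$, which is strictly negative.

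The only points needing care are the hypotheses, and there is no real obstacle here. The condition $r\neq 0$ gives strictness: for $r=0$ the derivative is identically zero. The condition $\sigma>0$ fixes the sign of the factor $2\sigma$. Finally, since the formula holds for each residual component loss with its own $\sigma_v$ or $\sigma_b$, the conclusion applies to both $\mathcal L_v$ and $\mathcal L_b$, and hence to $\mathcal L_{\mathrm{NAVi}}$ in each uncertainty argument separately, because $\lambda_b>0$.
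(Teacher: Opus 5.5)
Your proof is correct and is essentially the paper's argument. Both differentiate $\rho$ directly in $\sigma$ and obtain $-2\sigma r^2/\bigl[(\sigma^2+\varepsilon)^2(1+\lambda r^2/(\sigma^2+\varepsilon))^2\bigr]$, whose sign follows from $\sigma>0$ and $r\neq0$; your chain-rule factorization through $u$ and the simplified form $-2\sigma r^2/(\sigma^2+\varepsilon+\lambda r^2)^2$ just present the same computation more cleanly.
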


\begin{proposition}[Zero-Loss Dual-Domain Consistency]
\label{prop:dual_consistency}

Let
 $\mathcal L_{\mathrm{NAVi}}$ be defined as in
\eqref{eq:naviloss_total}--\eqref{eq:beam_naviloss}, with
$\lambda_b>0$. If
\begin{equation}
\mathcal L_{\mathrm{NAVi}}=0,
\end{equation}
then
\begin{equation}
\widehat{\mathbf v}=\mathbf v,
\qquad
H\widehat{\mathbf v}=\mathbf b.
\end{equation}
Hence, whenever a zero-loss solution exists, it is simultaneously
consistent in the navigation-state and DVL measurement domains.
\end{proposition}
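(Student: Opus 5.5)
The plan is to exploit the fact that $\mathcal L_{\mathrm{NAVi}}$ is a weighted sum of two non-negative terms, each of which vanishes only when its residual vanishes. Proposition~\ref{prop:naviloss_properties} already gives $\mathcal L_v\ge 0$ and $\mathcal L_b\ge 0$. Since $\lambda_b>0$, the hypothesis $\mathcal L_{\mathrm{NAVi}}=\mathcal L_v+\lambda_b\mathcal L_b=0$ forces
\[
\mathcal L_v=0 \quad\text{and}\quad \mathcal L_b=0 .
\]
The remaining task is to show that each of these equalities forces the corresponding residual to be zero.

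For the velocity term, write $s_v=\lambda\|\mathbf r_v\|^2/(\sigma_v^2+\varepsilon)\ge 0$, so that
\[
\mathcal L_v=\frac{1}{\lambda}\,\frac{s_v}{1+s_v}.
\]
Because $\lambda>0$ and $1+s_v>0$, we have $\mathcal L_v=0$ if and only if $s_v=0$. Since $\lambda>0$ and $\sigma_v^2+\varepsilon>0$, this holds if and only if $\|\mathbf r_v\|=0$. Hence $\mathbf r_v=\mathbf 0$, which by \eqref{eq:velocity_residual} gives $\widehat{\mathbf v}=\mathbf v$.

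The beam term is handled identically. With $s_b=\lambda\|\mathbf r_b\|^2/(\sigma_b^2+\varepsilon)$, the equality $\mathcal L_b=0$ gives $\mathbf r_b=\mathbf 0$, so $\widehat{\mathbf b}=\mathbf b$. By the observation model \eqref{eq:predicted_dvl_model}, $\widehat{\mathbf b}=H\widehat{\mathbf v}$, and therefore $H\widehat{\mathbf v}=\mathbf b$.

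There is no real obstacle here. The only points needing care are the strict positivity of $\lambda_b$, $\lambda$, and $\sigma^2+\varepsilon$, which together make the map $s\mapsto s/(1+s)$ vanish exactly at $s=0$. It is also worth remarking, to support the phrase ``whenever a zero-loss solution exists,'' that both conditions can hold only if $\mathbf b=H\mathbf v$. This requires the measured beams to be consistent with the reference velocity, and is an assumption about the data rather than a consequence of the loss.
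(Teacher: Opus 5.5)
Your proof is correct and takes the same approach as the paper. Non-negativity of $\mathcal L_v$ and $\mathcal L_b$ together with $\lambda_b>0$ forces both terms to vanish, each term vanishes exactly when its residual does, and the residual definitions together with $\widehat{\mathbf b}=H\widehat{\mathbf v}$ then give the conclusion; your rewriting $\mathcal L_v=\frac{1}{\lambda}\frac{s_v}{1+s_v}$ just makes explicit the ``if and only if'' step that the paper asserts directly.
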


\noindent Readers may referred to~\ref{Apndx} for the proofs of the propositions and lemmas presented above.

The analytical properties established above are further illustrated in
Figures~\ref{fig:loss_comparison} and \ref{fig:uncertainty_effect}.
Figure~\ref{fig:loss_comparison} compares NAViLoss with representative
regression loss functions in terms of their penalty and influence
functions. As established in Lemma~\ref{lem:robustness}, the NAViLoss penalty
remains bounded for large residuals during optimization, while its corresponding influence
approaches zero.
Figure~\ref{fig:uncertainty_effect} illustrates the uncertainty-aware
attenuation property established in Lemma~\ref{lem:uncertainty_attenuation}. Increasing
$\sigma$ attenuates both the penalty and the influence associated with a
given residual, thereby reducing the contribution of uncertain information
during optimization.

\begin{figure*}
    \begin{center}
        \includegraphics[width=\textwidth]{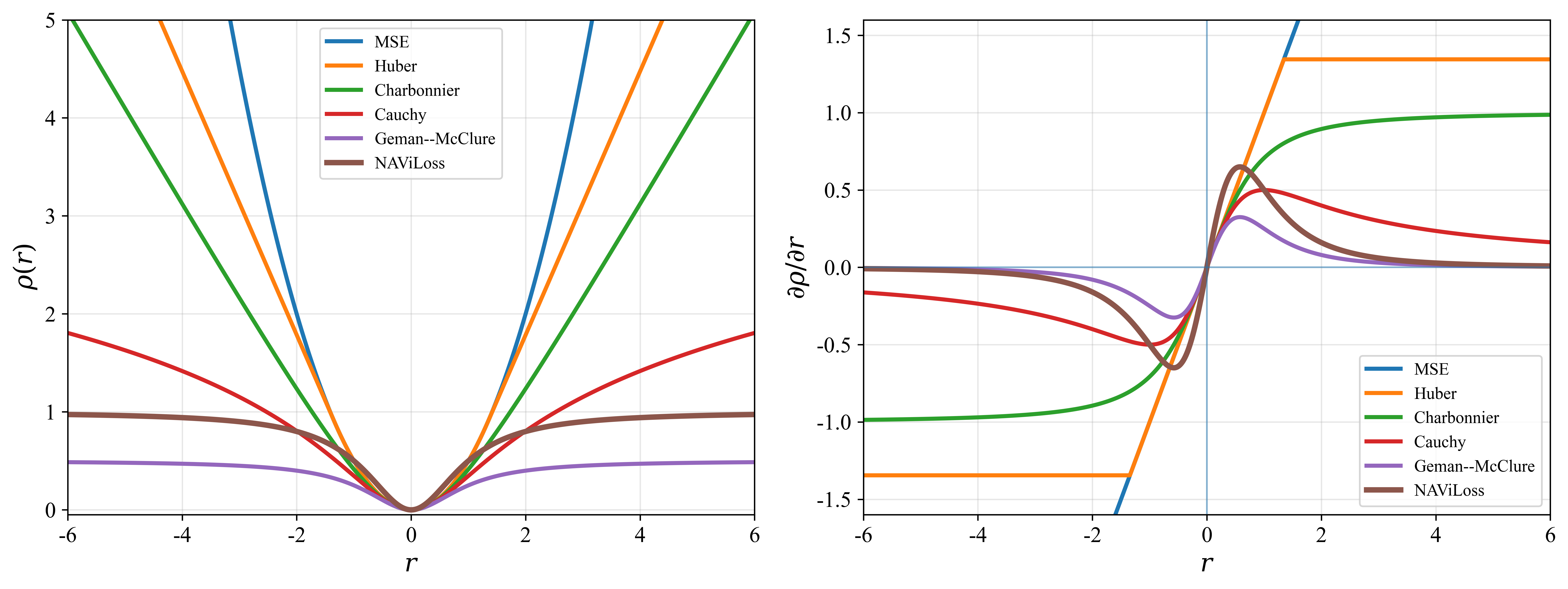}
        \caption{Comparison of NAViLoss with representative regression and robust
loss functions. Left: penalty functions $\rho(r)$ for residual
 $r$. Right: corresponding influence functions. }
         \label{fig:loss_comparison}
    \end{center} 
\end{figure*}

\begin{figure*}
    \begin{center}
        \includegraphics[width=\textwidth]{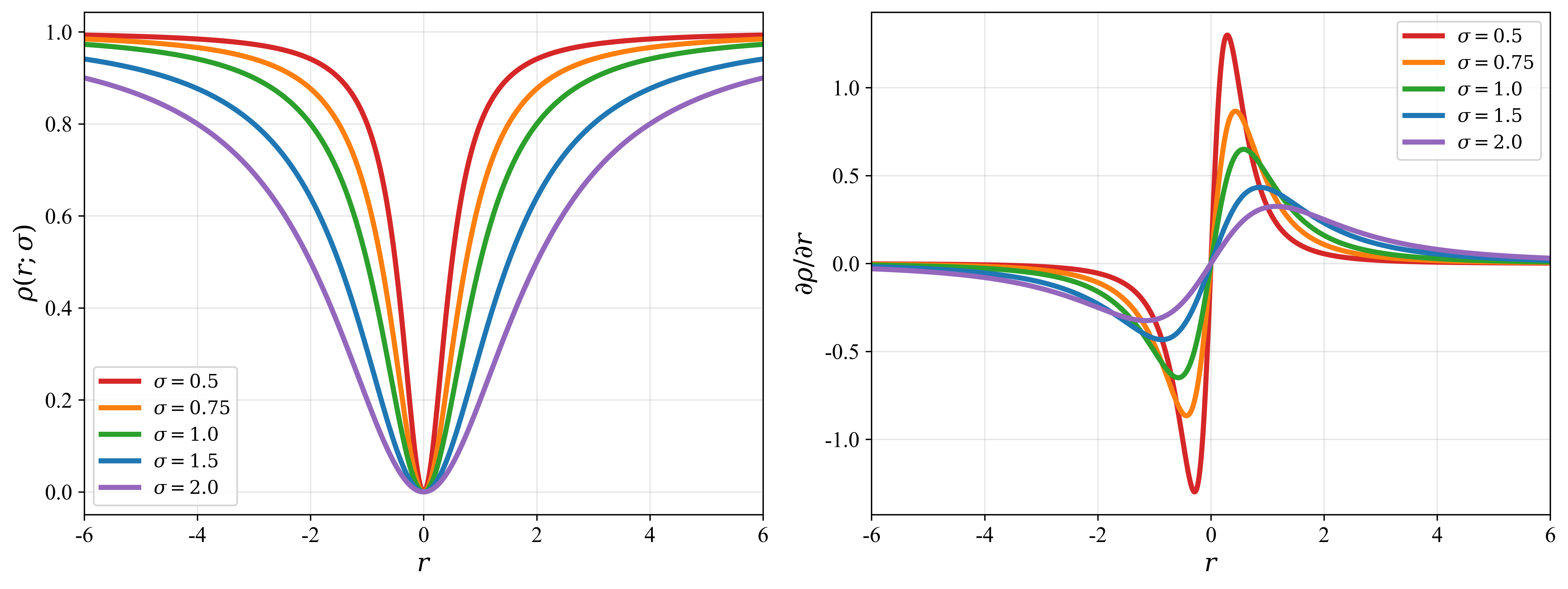}
    \caption{Effect of the uncertainty parameter $\sigma$ on NAViLoss for
fixed $\lambda=1$. Left: penalty function $\rho(r;\sigma)$ for different
uncertainty levels. Right: corresponding influence function.
 }
         \label{fig:uncertainty_effect}
    \end{center} 
\end{figure*}






\subsection{NAVi--DeepONet Architecture}
\label{subsec:deeponet_architecture}

We integrate NAViLoss~\eqref{eq:naviloss_total} with the DeepONet framework~\cite{SAHOO2026127766} and introduce NAVi--DeepONet, a physics-guided learning model for robust AUV velocity estimation.
Figure~\ref{fig:dvl_deeponet} illustrates the proposed architecture, which consists of branch and trunk subnetworks. Since the DVL and IMU operate at different sampling rates, the higher-rate IMU measurements are first temporally aligned with the corresponding DVL epochs. The synchronized four DVL beam measurements and six IMU channels are then arranged over a temporal window of length $W$ to construct the branch input $\mathbf X_k\in\mathbb{R}^{W\times10}$. The branch network extracts features from this temporal input, while the trunk network encodes the corresponding temporal coordinate. Their latent representations are subsequently fused to estimate the three-dimensional AUV velocity. Further details of the synchronization and input construction are provided in Section~\ref{subsec:fullbeam_framework}.

\begin{figure}
    \centering
    \includegraphics[width=1.05\textwidth]{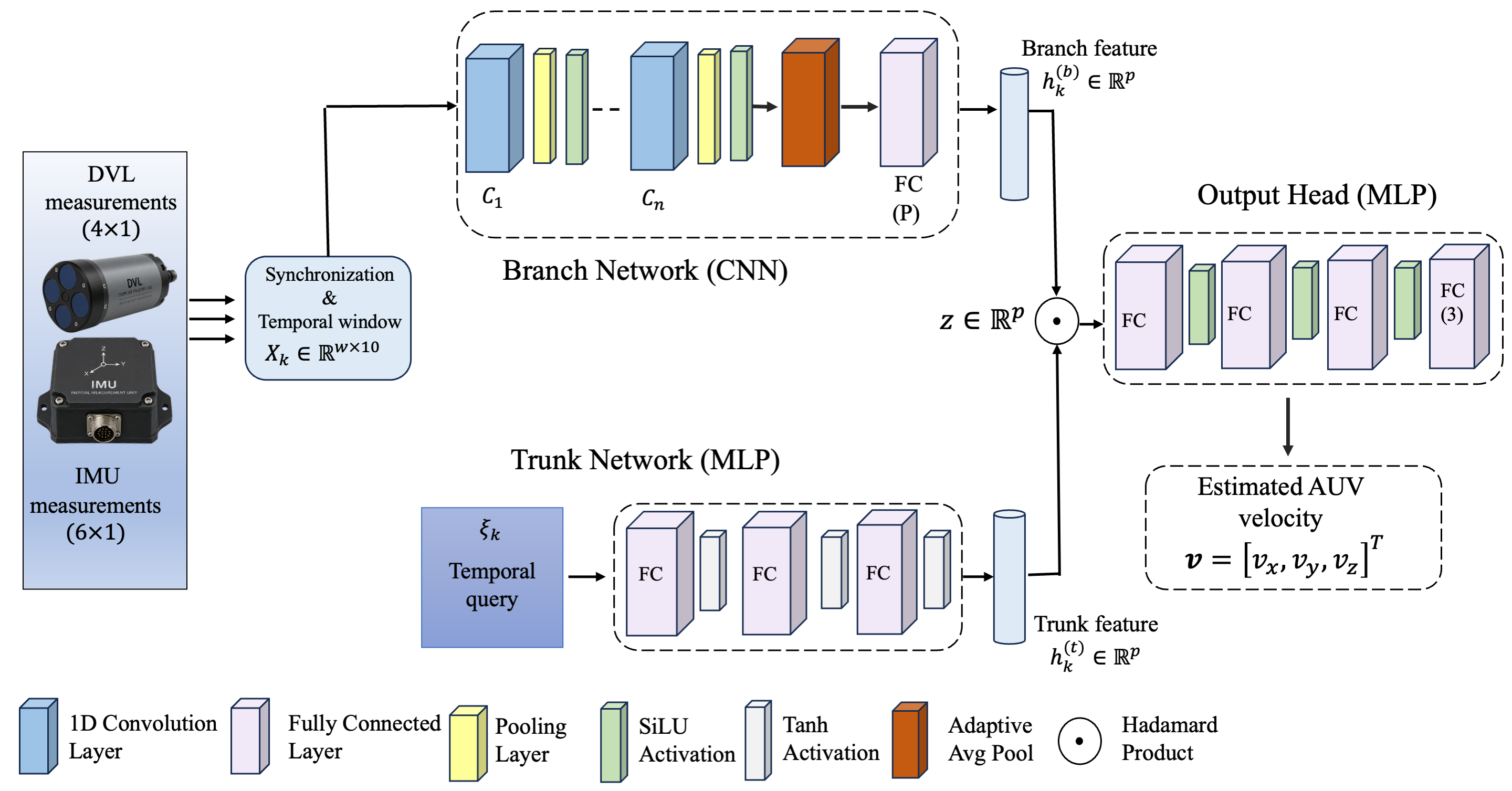}
    \caption{Architecture of the proposed NAVi-DeepONet framework.}
    \label{fig:dvl_deeponet}
\end{figure}
\noindent


Let the temporal sequence of synchronized inertial/DVL measurements is defined as
\begin{equation} \label{eq:23}
\mathbf X_k
=
\left[
\mathbf x_{k-W+1},\ldots,\mathbf x_k
\right],
\end{equation}
where $W$ denotes the temporal window length. 
The branch network maps
$\mathbf X_k$ into a latent representation
\begin{equation}
\mathbf h_k^{(b)}
=
\mathcal B_{\theta_b}(\mathbf X_k)
\in\mathbb R^{p},
\label{eq:new_branch}
\end{equation}
where $\mathcal B_{\theta_b}$ denotes the branch network with trainable
parameters $\theta_b$, and $p$ denotes the latent dimension.


The trunk input is defined by the normalized temporal coordinate
\begin{equation}
\xi_k
=
\frac{t_k-\mu_t}{\sigma_t},
\label{eq:new_time}
\end{equation}
where $\mu_t$ and $\sigma_t$ denote the mean and standard deviation (STD) of
the training timestamps, respectively. The trunk network maps $\xi_k$
into the latent representation
\begin{equation}
\mathbf h_k^{(t)}
=
\mathcal T_{\theta_t}(\xi_k)
\in\mathbb R^{p},
\label{eq:new_trunk}
\end{equation}
where $\mathcal T_{\theta_t}$ denotes the trunk network with trainable
parameters $\theta_t$.

Then the branch and trunk representations are combined through the Hadamard product,
\begin{equation}
\mathbf q_k
=
\mathbf h_k^{(b)}
\odot
\mathbf h_k^{(t)},
\label{eq:new_fusion}
\end{equation}
and the resulting latent feature is passed through the prediction head MLP
\begin{equation}
\hat{\mathbf v}_k
=
\mathcal P_{\theta_p}(\mathbf q_k),
\label{eq:new_head}
\end{equation}
where $\mathcal P_{\theta_p}$ denotes the prediction network. 
Using the branch and trunk representations and their fusion defined in~\eqref{eq:new_branch}--\eqref{eq:new_head}, the complete NAVi--DeepONet approximation is expressed as 
\begin{equation}
\hat{\mathbf v}_k
=
\mathcal F_{\Theta}
(\mathbf X_k,\xi_k)
=
\mathcal P_{\theta_p}
\left[
\mathcal B_{\theta_b}(\mathbf X_k)
\odot
\mathcal T_{\theta_t}(\xi_k)
\right],
\label{eq:new_operator}
\end{equation}
where $\mathcal F_{\Theta}$ denotes the overall NAVi--DeepONet operator,
\(
\Theta=\{\theta_b,\theta_t,\theta_p\}
\)
collects all trainable parameters. During training, NAViLoss jointly incorporates the velocity-estimation and DVL beam-consistency residuals to optimize the network parameters.

\subsection{DVL Operation mode}

Here two sensing configurations are considered
(1) noise-aware velocity estimation using complete DVL beam and IMU measurements and
(2) DVL measurement recovery using partial DVL beam and complete IMU measurements.

\subsubsection{Noise-Aware Velocity Estimation}
\label{subsec:fullbeam_framework}

We first consider the nominal sensing condition in which all four DVL beam
measurements and the complete IMU measurements are available. Let
\begin{equation}
\mathbf b_k
=
[b_{1,k},b_{2,k},b_{3,k},b_{4,k}]^\top
\in\mathbb R^4
\end{equation}
denote the four-beam DVL measurement vector at epoch $k$. The IMU measurement vector is
\begin{equation}
\mathbf u_j
=
[f_{x,j},f_{y,j},f_{z,j},
\omega_{x,j},\omega_{y,j},\omega_{z,j}]^\top
\in\mathbb R^6
\end{equation}
where $f_{x,j}$, $f_{y,j}$, and $f_{z,j}$ are raw specific-force components, and $\omega_{x,i}$, $\omega_{y,i}$, and $\omega_{z,i}$ are raw angular-rate components associated with the $j$-th DVL interval.
A DVL window
of length $W_d$ is constructed as
\begin{equation}
\mathbf D_k
=
[
\mathbf b_{k-W_d+1},
\ldots,
\mathbf b_k
]
\in\mathbb R^{W_d\times4},
\label{eq:full_dvl_window}
\end{equation}
whereas a high-rate IMU window of length $W_i$ is defined as
\begin{equation}
\mathbf I_k
=
[
\mathbf u_{j_k-W_i+1},
\ldots,
\mathbf u_{j_k}
]
\in\mathbb R^{W_i\times6}.
\label{eq:full_imu_window}
\end{equation}

The DVL and IMU histories are encoded independently as
\begin{equation}
\mathbf h_{d,k}
=
\mathcal E_d(\mathbf D_k),
\label{eq:dvl_encoder_full}
\end{equation}
and
\begin{equation}
\mathbf h_{i,k}
=
\mathcal E_i(\mathbf I_k),
\label{eq:imu_encoder_full}
\end{equation}
where $\mathcal E_d(\cdot)$ and $\mathcal E_i(\cdot)$ denote the DVL and
IMU temporal encoders, respectively.
The resulting sensor features,~\eqref{eq:dvl_encoder_full}--\eqref{eq:imu_encoder_full}, are concatenated and projected into the
branch latent space,
\begin{equation}
\mathbf h_k^{(b)}
=
\mathcal C
\left(
[
\mathbf h_{d,k}
\Vert
\mathbf h_{i,k}
]
\right),
\label{eq:full_sensor_fusion}
\end{equation}
where $\Vert$ denotes feature concatenation and $\mathcal C(\cdot)$ denotes
the fusion projection.  It is then combined with the trunk representation as in \eqref{eq:new_fusion} and finally, the resulting latent feature is passed through the prediction head MLP \eqref{eq:new_head} to estimate the AUV velocity.

Using~\eqref{eq:23}, \eqref{eq:new_time}, \eqref{eq:new_operator}, \eqref{eq:full_dvl_window} and \eqref{eq:full_imu_window}, the proposed NAVi--DeepONetV1 operator is written in compact form as
\begin{equation}
\mathcal F_{\Theta}^{\mathrm{full}}
:
(\mathbf D_k,\mathbf I_k,\xi_k)
\mapsto
\hat{\mathbf v}_k.
\label{eq:fullbeam_operator}
\end{equation}

\noindent

\subsubsection{DVL Measurement Recovery}
\label{subsec:missingbeam_framework}

We next consider degraded DVL operation in which one or more DVL beam
measurements are unavailable while the full IMU measurements remain
available. As DVL beams become unavailable, the remaining
beam geometry provides progressively less information about the
three-dimensional vehicle velocity. This setting is therefore well suited
for evaluating the adaptive structure of NAViLoss.


Let
\begin{equation}
\mathbf m_k
=
[m_{1,k},m_{2,k},m_{3,k},m_{4,k}]^\top,
\qquad
m_{i,k}\in\{0,1\},
\label{eq:new_mask_vector}
\end{equation}
denote the DVL beam-availability vector. The corresponding masking matrix is
defined as
\begin{equation}
\mathbf M_k
=
\operatorname{diag}(\mathbf m_k).
\label{eq:new_mask_matrix}
\end{equation}

\noindent
The partially available DVL measurement vector is then expressed as
\begin{equation}
\mathbf b_k^{m}
=
\mathbf M_k\mathbf b_k.
\label{eq:new_partial_beam}
\end{equation}

\noindent
Using a temporal window of length $W_d$, the partial DVL history is
constructed as
\begin{equation}
\mathbf D_k^{m}
=
[
\mathbf b_{k-W_d+1}^{m},
\ldots,
\mathbf b_k^{m}
].
\label{eq:missing_dvl_window}
\end{equation}

\noindent
The full high-rate IMU history $\mathbf I_k$ defined in
\eqref{eq:full_imu_window} remains available in this configuration.
Accordingly, the partial DVL and full IMU histories are encoded as
\begin{equation}
\mathbf h_{d,k}^{m}
=
\mathcal E_d^{m}
\left(
\mathbf D_k^{m}
\right),
\label{eq:missing_dvl_encoder}
\end{equation}
and
\begin{equation}
\mathbf h_{i,k}^{m}
=
\mathcal E_i^{m}
\left(
\mathbf I_k
\right).
\label{eq:missing_imu_encoder}
\end{equation}

\noindent
The two feature representations are fused as
\begin{equation}
\mathbf h_{k,m}^{(b)}
=
\mathcal C_m
\left(
[
\mathbf h_{d,k}^{m}
\Vert
\mathbf h_{i,k}^{m}
]
\right),
\label{eq:missing_sensor_fusion}
\end{equation}
and subsequently combined with the trunk representation,
\begin{equation}
\mathbf q_k^{m}
=
\mathbf h_{k,m}^{(b)}
\odot
\mathbf h_k^{(t)}.
\label{eq:missing_fusion}
\end{equation}

\noindent Finally, the resulting latent feature is passed through the prediction head MLP \eqref{eq:new_head} to estimate the AUV velocity 
\begin{equation}
\hat{\mathbf v}_k
=
\mathcal P_{\theta_p}(\mathbf q_k^{m}).
\label{eq:missing_velocity}
\end{equation}

\noindent Using~\eqref{eq:23}, \eqref{eq:new_time}, \eqref{eq:new_operator}, \eqref{eq:full_dvl_window} and \eqref{eq:missing_dvl_window} the NAVi--DeepONetV2 operator can be compactly written as
\begin{equation}
\mathcal F_{\Theta}^{\mathrm{miss}}
:
(\mathbf D_k^{m},\mathbf I_k,\xi_k)
\mapsto
\hat{\mathbf v}_k.
\label{eq:missing_operator}
\end{equation}

\noindent

\subsection{Training}
\label{subsec:naviloss_training}

For each training sample, let
\(
\mathcal A_k\subseteq\{1,2,3,4\}
\)
denote the set of available DVL beams, and let
\(
H_{\mathcal A_k}
\)
denote the corresponding active-beam geometry submatrix. The uncertainty associated with the beam-space
is determined from the information provided by the active measurement geometry.

The minimum eigenvalue of the active-beam Gram matrix satisfies
\begin{equation}
\lambda_{\min}
\left(H_{\mathcal A_k}^{\top}H_{\mathcal A_k}\right)
=
\min_{\|\mathbf q\|_2=1}
\left\|H_{\mathcal A_k}\mathbf q\right\|_2^2,
\label{eq:min_eigenvalue}
\end{equation}
and therefore measures the weakest-observed direction in the
three-dimensional velocity space. 

Accordingly, for at least three available beams, the beam uncertainty is
defined as
\begin{equation}
\widetilde{\sigma}_{b,k}
=
\frac{1}
{
\max\left(
\lambda_{\min}
\left(
H_{\mathcal A_k}^{\top}H_{\mathcal A_k}
\right),
\varepsilon
\right)
},
\qquad
|\mathcal A_k|\geq3,
\label{eq:beam_uncertainty_geometry}
\end{equation}
where $\varepsilon>0$ prevents numerical instability. Thus, a smaller
minimum eigenvalue produces a larger uncertainty, reducing the influence
of poorly observed beam configurations during training.

When fewer than three beams are available, the corresponding DVL geometry
does not provide sufficient independent information to constrain the
three-dimensional velocity. 
Accordingly, a predefined large uncertainty value is assigned
\begin{equation}
\widetilde{\sigma}_{b,k}
=
\sigma_{\max},
\qquad
|\mathcal A_k|<3,
\label{eq:beam_uncertainty_degenerate}
\end{equation}
where $\sigma_{\max}$ represents high-uncertainty level.

To maintain the uncertainty scale during optimization,
the beam uncertainties are normalized within each mini-batch as
\begin{equation}
\sigma_{b,k}
=
\frac{
\widetilde{\sigma}_{b,k}
}{
\frac{1}{B}
\sum_{j=1}^{B}
\widetilde{\sigma}_{b,j}
+
\varepsilon
}
+
\varepsilon,
\label{eq:beam_uncertainty_normalization}
\end{equation}
where $B$ denotes the mini-batch size.
Consequently, full beam configurations receive relatively
smaller uncertainty values, whereas degraded geometries receive larger
values and therefore contribute less to the beam-space loss.

In this experiment, the velocity uncertainty is fixed as $\sigma_{v,k}=1$. However, it can be adapted when uncertainty information for the reference velocity is available. Such uncertainty is not explicitly modeled in the present experiments. In contrast, the DVL measurement uncertainty varies with the available beam geometry and is therefore adaptively represented through $\sigma_{b,k}$.
The velocity and beam residuals are then evaluated as
\begin{equation} \label{eq:vel_res}
\mathbf r_{v,k}
=
\widehat{\mathbf v}_k-\mathbf v_k,
\end{equation}
and
\begin{equation} \label{eq:beam_res}
\mathbf r_{b,k}
=
H_{\mathcal A_k}\widehat{\mathbf v}_k
-
\mathbf b_{\mathcal A_k,k}.
\end{equation}

The resulting uncertainty-aware residuals are used to compute
$\mathcal L_v$ and $\mathcal L_b$ using \eqref{eq:velocity_naviloss} and \eqref{eq:beam_naviloss}, respectively. Thereby, the
network parameters are updated by minimizing

\begin{equation}
\Theta^\star
=
\arg\min_{\Theta}
\frac{1}{N}
\sum_{k=1}^{N}
\mathcal L_{\mathrm{NAVi}}
\left(
\mathbf r_{v,k},
\mathbf r_{b,k},
\boldsymbol{\sigma}_k
\right),
\label{eq:naviloss_optimization}
\end{equation}
where
\(
\boldsymbol{\sigma}_k
=
(\sigma_{v,k},\sigma_{b,k})
\)
contains the uncertainty measures associated with the velocity and beam
residuals.
Algorithm~\ref{alg:navi_deeponet} summarizes the complete training workflow. 
The proposed DVL-DeepONet framework was trained using the hyperparameters detailed in Table~\ref {tab:model_hyperparameters}.

\begin{algorithm}[!ht]
\caption{Proposed training algorithm for NAVi-DeepONet model.}
\label{alg:navi_deeponet}

Construct inertial/DVL inputs $\mathbf X_k$\;
Construct trunk input $\xi_k$\;
Initialize network parameters $\Theta$\;

\For{each epoch}{
    \For{each mini-batch}{
        Generate the active DVL beam set $\mathcal A_k$\;
        Retain full IMU measurements\;

        Compute branch representation $\mathbf h_k^{(b)}$\;
        Compute trunk representation $\mathbf h_k^{(t)}$\;
        Compute $\mathbf q_k$ using \eqref{eq:new_fusion}\;
        Predict $\hat{\mathbf v}_k$\;

        Compute velocity residual using \eqref{eq:vel_res}\;

        Set $\sigma_{v,k}=1$\;

        Construct active DVL geometry
        $H_{\mathcal A_k}$\;

        \eIf{$|\mathcal A_k|\geq3$}{
            Compute adaptive beam uncertainty using \eqref{eq:beam_uncertainty_geometry}
           
        }{
            Set $\tilde{\sigma}_{b,k}=\sigma_{\max}=10$\;
        }

        Obtain
        $\sigma_{b,k}$ using \eqref{eq:beam_uncertainty_normalization}\;

        Compute beam residual using \eqref{eq:beam_res}\;

        Compute $\mathcal L_v$ using \eqref{eq:velocity_naviloss}\;
        Compute $\mathcal L_b$ using \eqref{eq:beam_naviloss}\;
        Compute total loss using \eqref{eq:naviloss_total}\;

        Update $\Theta$\;

        \If{$\mathcal L_{\mathrm{NAVi}}<\varepsilon$}{
            \Return{$\Theta$}
        }
    }
}

\Return{$\Theta$}
\end{algorithm}

\begin{table*}[htbp]
\centering
\caption{Model architecture and hyperparameters employed for the training of NAVi-DeepONet.}
\label{tab:model_hyperparameters}
\footnotesize
\setlength{\tabcolsep}{7pt}
\renewcommand{\arraystretch}{1.10}
\begin{tabular}{lcc}
\toprule
\textbf{Parameter} &
\textbf{Noise-Aware Velocity Estimation} &
\textbf{DVL Measurement Recovery} \\
\midrule

\multicolumn{3}{l}{} \\

Window length ($W$) 
& 2 & 2 \\

Input features (DVL + IMU) 
& 10 & 10 \\

Number of branch networks 
& 1 & 15 \\
Branch kernel size 
& 3 & 3 \\
Branch CNN channels 
& $10$--$32$--$64$--$64$--$32$
& $10$--$32$--$64$--$64$--$128$ \\

Trunk MLP 
& $1$--$64$--$128$--$32$
& $1$--$64$--$128$--$128$ \\

Prediction head 
& $32$--$128$--$64$--$3$
& $128$--$128$--$64$--$3$ \\

Branch activation 
& SiLU & SiLU \\

Trunk activation 
& Tanh & Tanh \\

Head activation 
& SiLU & SiLU \\

Dropout 
& 0.1 & 0.1 \\

Optimizer 
& AdamW & AdamW \\

Initial learning rate 
& $5\times10^{-4}$ & $10^{-3}$ \\

Weight decay 
& $10^{-5}$ & $10^{-5}$ \\

Batch size 
& 128 & 128 \\

Maximum epochs 
& 100 & 100 \\

Patience 
& 60 & 40 \\

Velocity-loss weight ($\lambda_v$)
& 1.0 & 1.0 \\

Beam-loss weight ($\lambda_b$)
& 1.1 & 0.1 \\

NAViLoss parameter ($\lambda$)
& 0.1 & 0.1 \\

\bottomrule
\end{tabular}
\end{table*}

\section{Analysis and Results} \label{sec:4}
This section discusses the dataset, evaluation metrics, and results obtained by the proposed model. 

\subsection{AUV Dataset}

The experimental data were collected during a series of sea trials in the Mediterranean Sea in Israel, using the University of Haifa's Snapir AUV~\cite{shurin2022autonomous}. Snapir is a modified ECA Robotics A18D mid-size AUV~\cite{eca_a18d_2023}, designed for deep-water autonomus missions at depths of up to 3000m with an operational endurance of approximately 21 hours. The vehicle is equipped with an iXblue Phins Subsea INS~\cite{ixblue_phins_2023} and a Teledyne RDI WorkHorse Navigator DVL~\cite{teledyne_dvl_2023}. The DVL has a nominal velocity STD of 0.02m/s. The INS measurements are recorded at high-rate of 100Hz, while the DVL operates at 1Hz.
During the mission, the vehicle's fail-safe mechanism continuously monitors the DVL quality and
bottom-lock status. If prolonged DVL failure or loss of bottom lock occurs, it aborts the mission and surfaces to ensure safe recovery.

A commonly adopted technique in DVL-learning studies~\cite{cohen2022beamsnet,yampolsky2025dcnet}, is to use the recorded DVL measurements as ground truth (GT) and generate unit under test (UUT) measurements by introducing controlled perturbations.
Following this approach, semi-synthetic UUT measurements are generated through the error-injection model 
\begin{equation}
\label{eq:43}
\tilde{\mathbf y}_k
=
(1+s_{D})\mathbf H\mathbf v_{b,k}^{d}
+
\mathbf b_{D}
+
\sigma_{D}\boldsymbol{\epsilon}_k,
\end{equation}

\noindent
where $\mathbf H\in\mathbb{R}^{4\times3}$ is the DVL beam geometry matrix and $\mathbf v_{b,k}^{d}$ denotes the DVL velocity vector expressed in the body frame,
$s_{D}$ represents the scale-factor error, $\mathbf b_{D}$ denotes the constant beam bias, $\sigma_{D}$ is the STD of the additive noise, and
$\boldsymbol{\epsilon}_k\sim\mathcal{N}(\mathbf 0,\mathbf I_4)$. In particular,  $s_{D}=0.7\%$, $b_{D}=0.001$m/s, and  $\sigma_{D}=0.042$m/s are introduced, consistent with the experimental configuration reported in literature~\cite{cohen2022beamsnet, SAHOO2026127766}.
It reproduce real-world underwater DVL measurement. 
Following noise injection in the beam domain, the measurements are transformed into the DVL coordinate frame using

\begin{equation} \label{eq:1}
\hat{\mathbf v}_b^d
=
(\mathbf H^{T}\mathbf H)^{-1}
\mathbf H^{T}
\tilde{\mathbf y}.
\end{equation}
 and subsequently into the body frame using the rotation matrix $R_d^{b}$. The pipeline for simulating noisy DVL beam measurements from the GT velocity is outlined in Figure~\ref{fig:nosisy}.

\begin{figure*}
    \begin{center}
        \includegraphics[width=\textwidth]{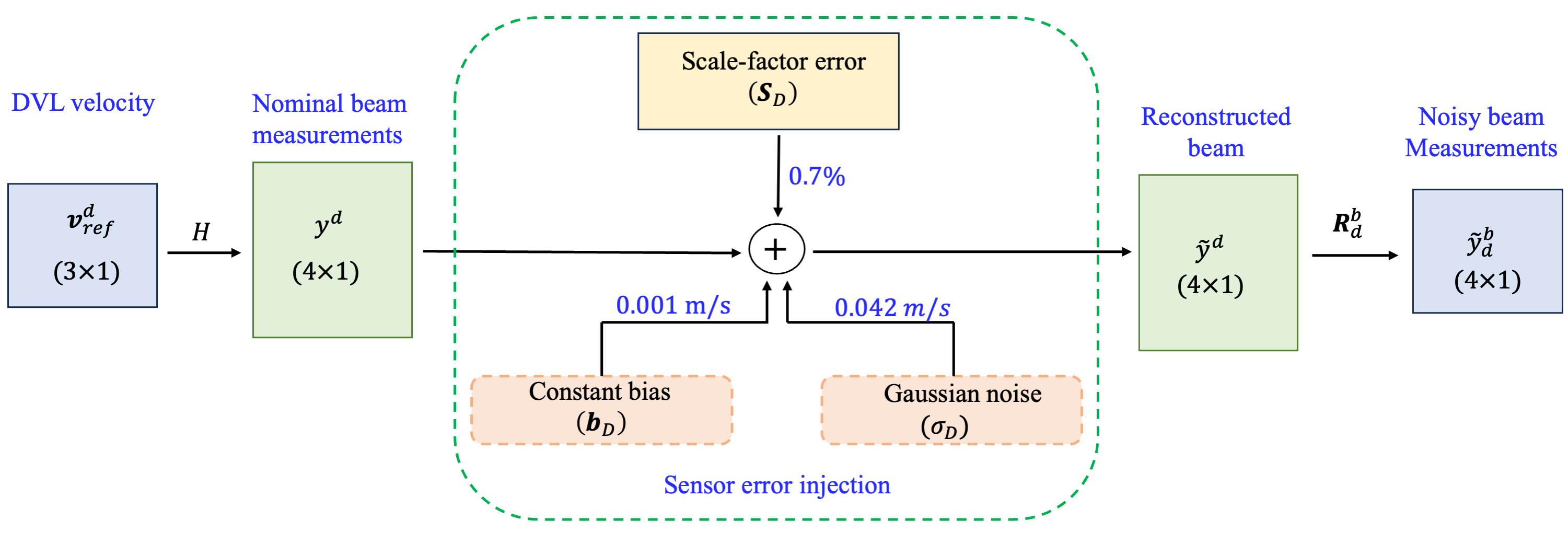}
        \caption{Pipeline for simulating noisy DVL beam measurements from the GT velocity. }
         \label{fig:nosisy}
    \end{center} 
\end{figure*}

The dataset contains thirteen trajectories, denoted T1--T13, totalling 10,000m.
Each trajectory spans approximately 400s, with variations in trajectory geometry, travelled distance, operating depth, and vehicle speed. These variations provide a diverse set of operating conditions for evaluating the proposed framework.
For cross-validation, trajectories are organized into three distinct data folds as shown in Table~\ref{tab:cv_splits}. Unless otherwise specified, the experimental results presented in this work correspond to Fold~1. Figure~\ref{fig:test_trajectories} illustrates the GT paths of the test trajectories across the three folds.

\begin{table*}
\centering
\caption{Three-fold cross-validation scheme used for NAVi-DeepONet evaluation.}
\label{tab:cv_splits}
\footnotesize
\setlength{\tabcolsep}{2pt}
\begin{tabular}{ccccccc}
\toprule
Fold &
Train. Traj. &
Val. Traj. &
Test Traj. &
Train Dist. [m] &
Val. Dist. [m] &
Test Dist. [m] \\
\midrule

F1 &
4,5,6,7,8,9,10,11,12 &
1,13 &
2,3 &
7259.75 &
1496.86 &
1346.59 \\

F2 &
1,6,7,8,9,10,11,12,13 &
2,3 &
4,5 &
7094.26 &

1346.59 &
1566.93 \\

F3 &
1,2,3,4,9,10,11,12,13 &
5,6 &
7,8 &

6683.41 &
1637.56 &
1686.80 \\

\bottomrule
\end{tabular}
\end{table*}

\begin{figure*}
\centering

\subfloat[Trajectory 2]{
\includegraphics[width=0.32\textwidth]{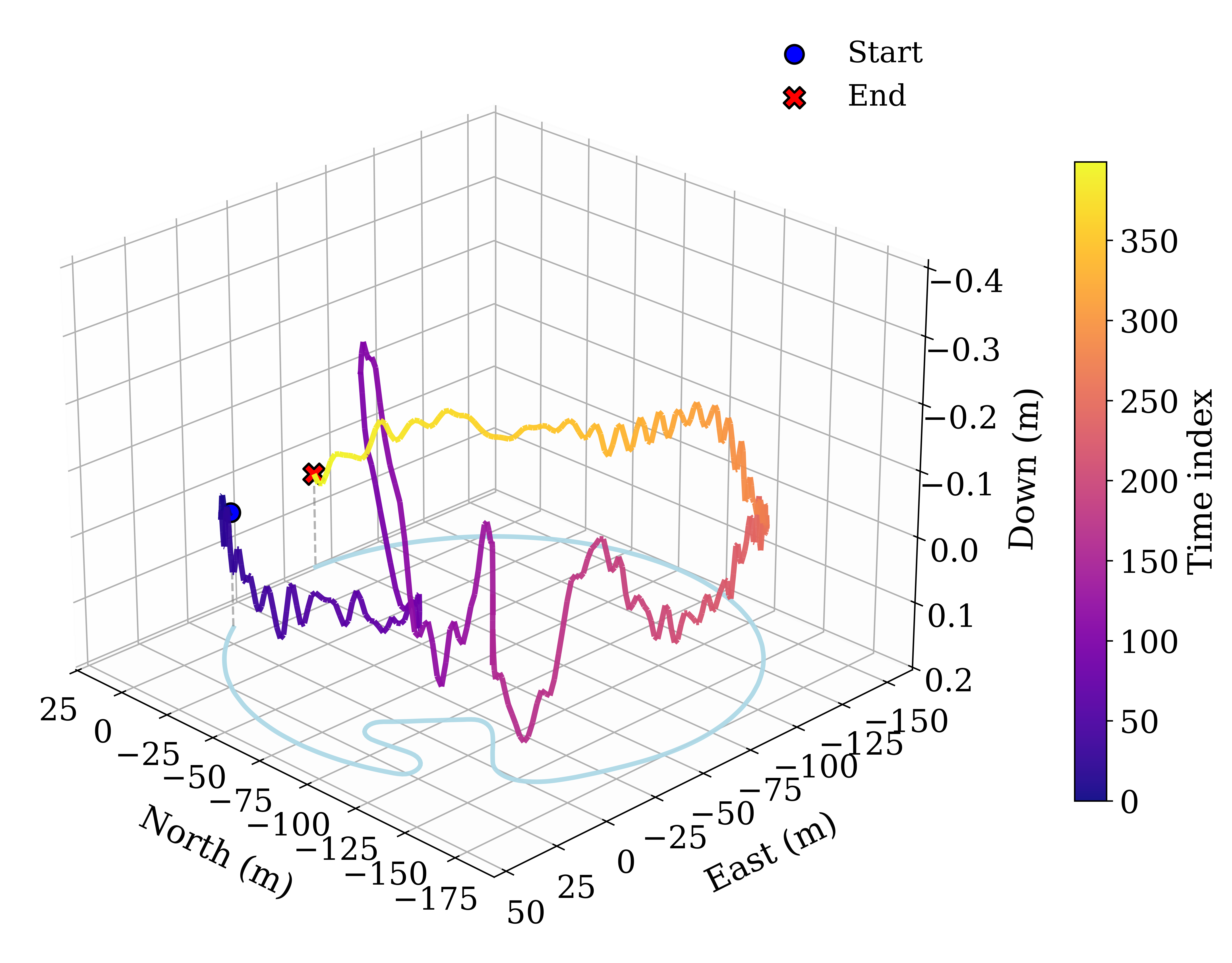}
}
\subfloat[Trajectory 3]{
\includegraphics[width=0.32\textwidth]{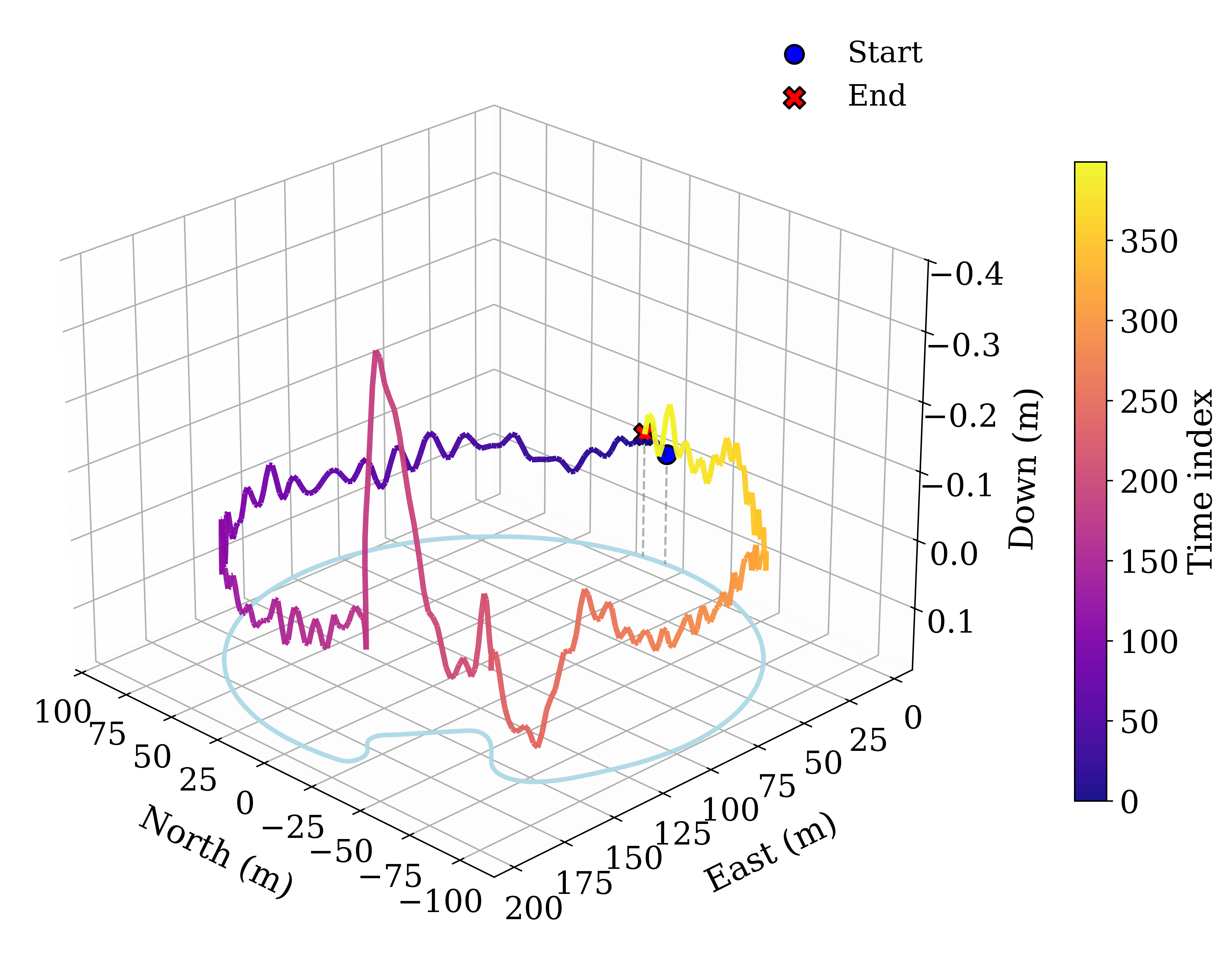}
}
\subfloat[Trajectory 4]{
\includegraphics[width=0.32\textwidth]{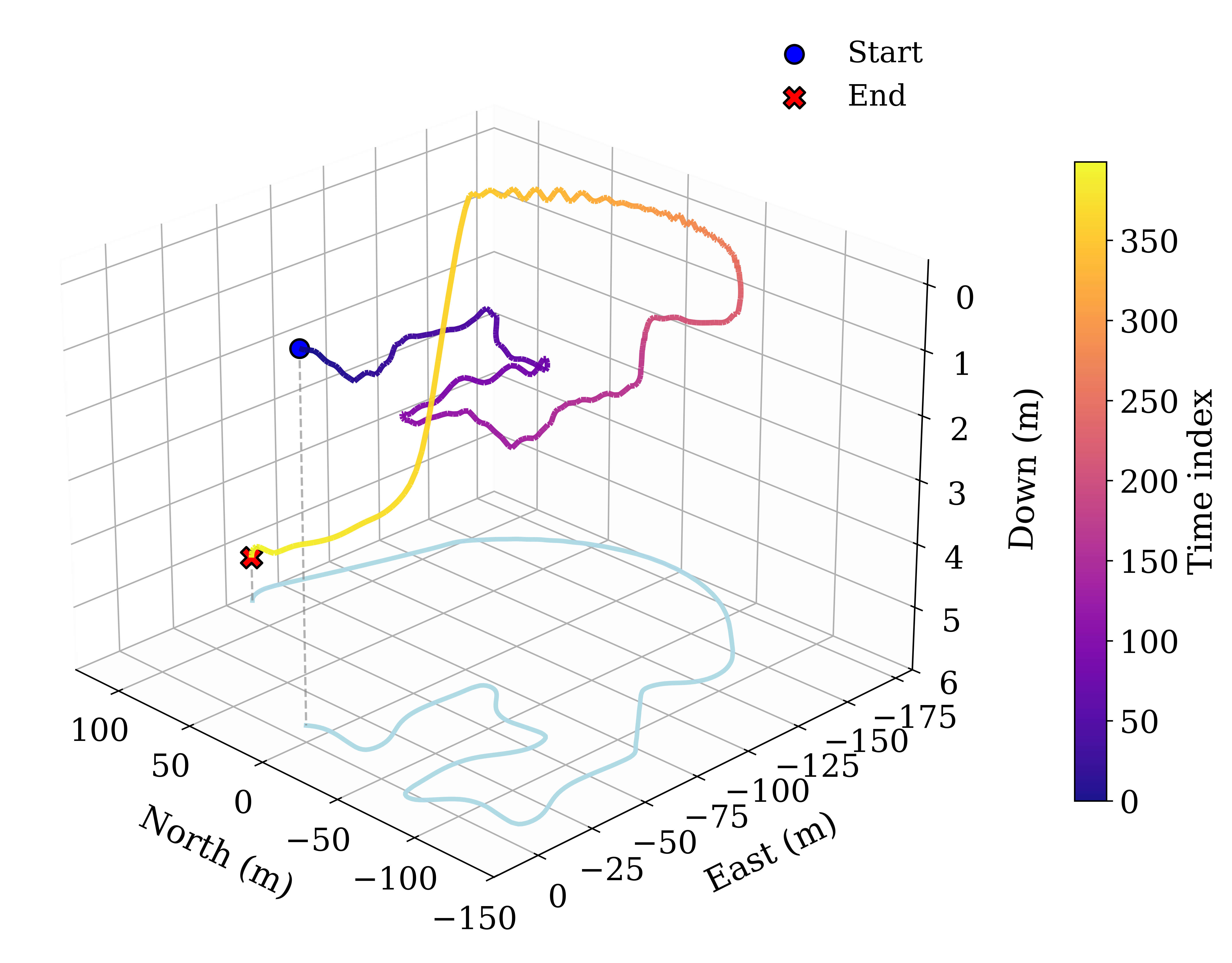}
}

\vspace{0.15cm}

\subfloat[Trajectory 5]{
\includegraphics[width=0.32\textwidth]{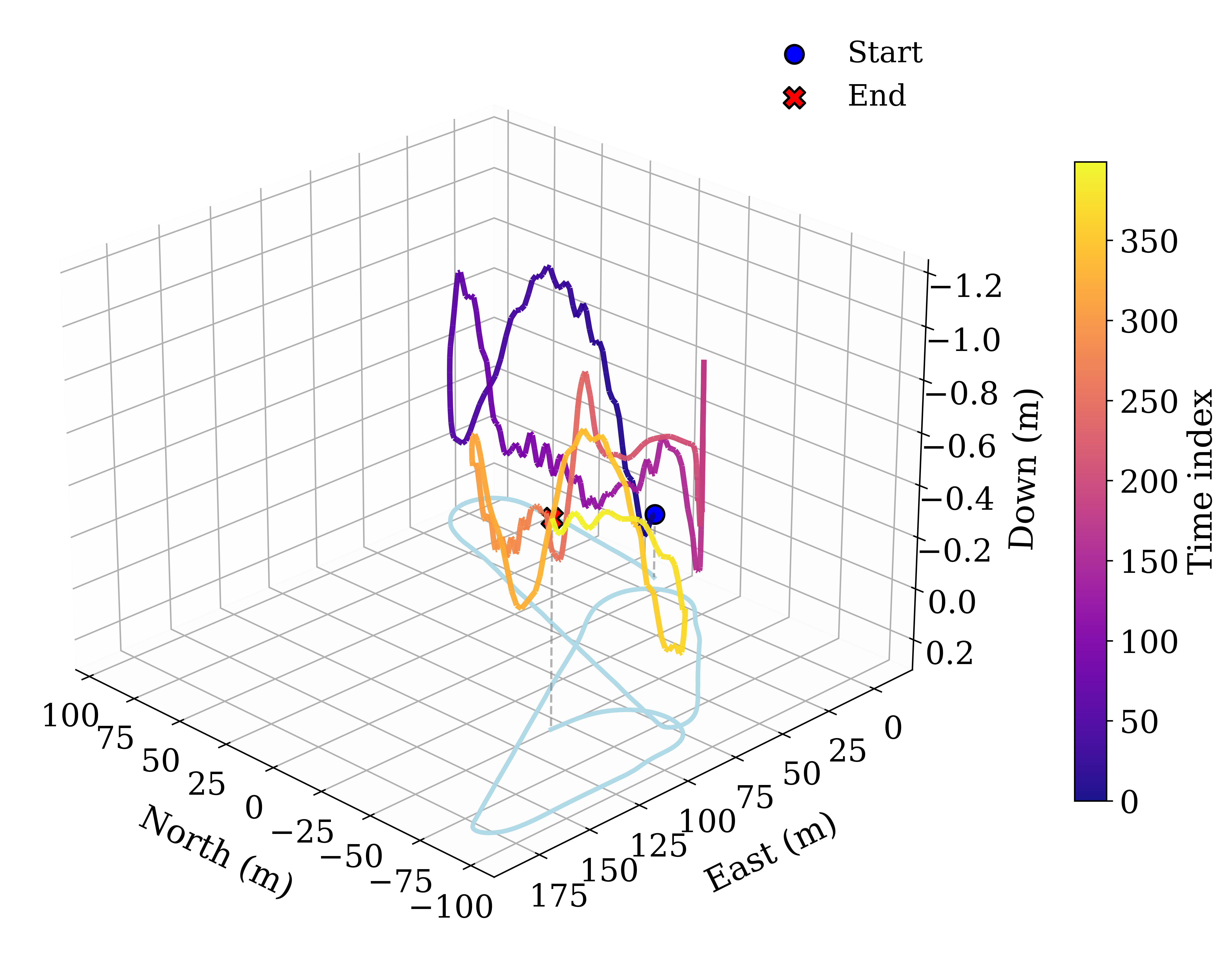}
}
\subfloat[Trajectory 7]{
\includegraphics[width=0.32\textwidth]{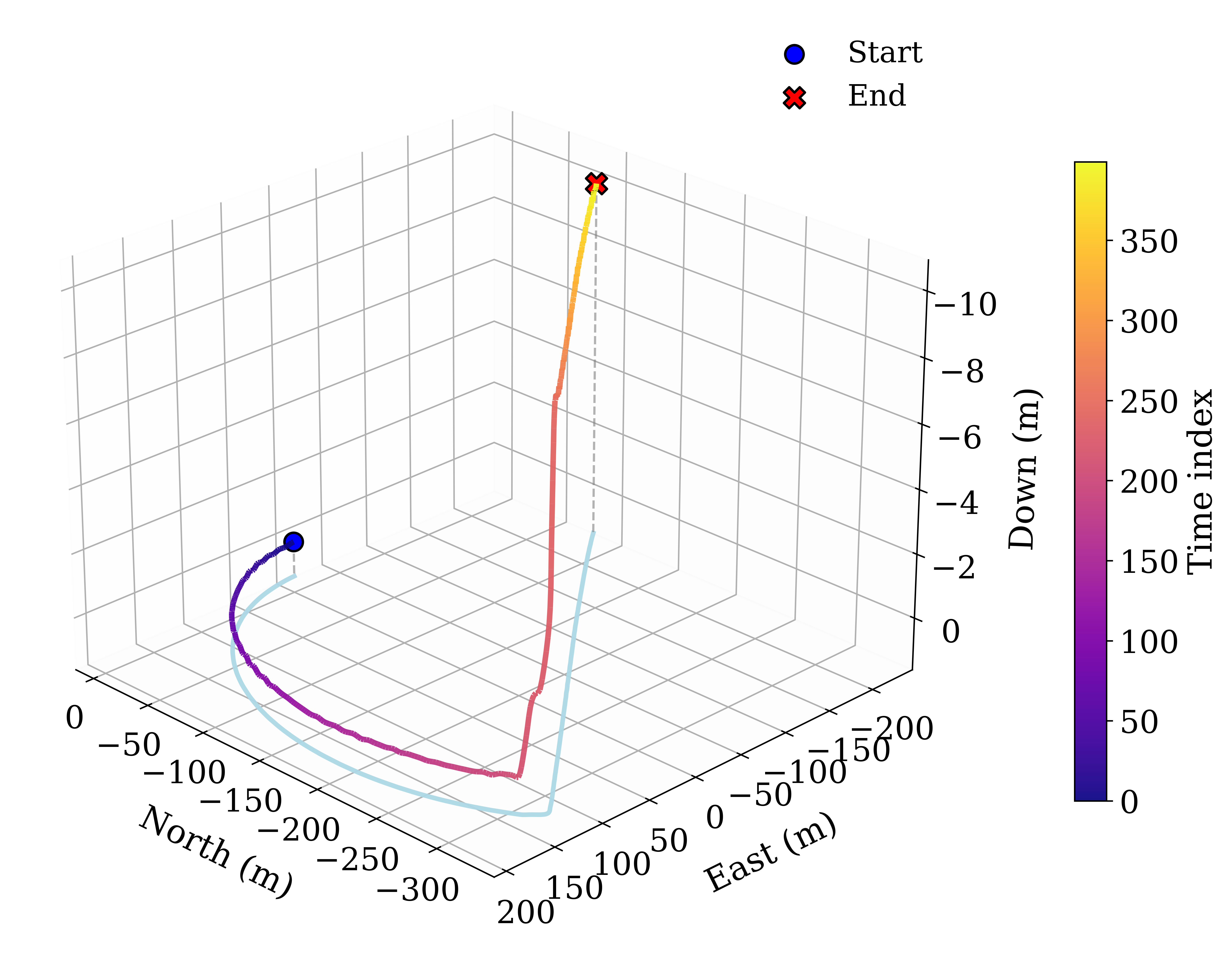}
}
\subfloat[Trajectory 8]{
\includegraphics[width=0.32\textwidth]{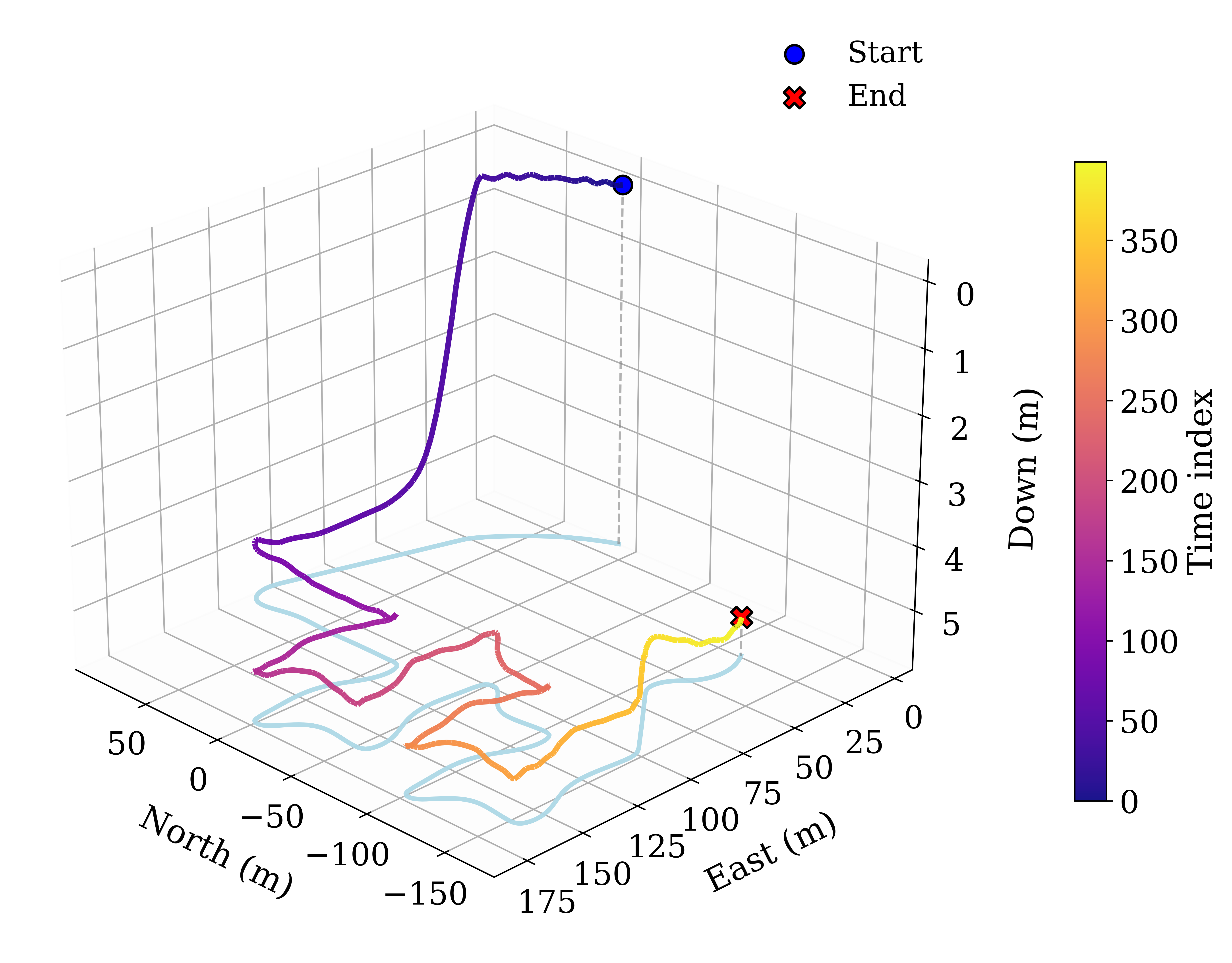}
}
\caption{Three-dimensional GT paths of the test missions, with their corresponding North--East projections shown on the horizontal plane.}

\label{fig:test_trajectories}
\end{figure*}





\subsection{Performance Metrics} \label{metric}

Here, five quantitative metrics have been employed to access the proposed model.
 Let the predicted and GT velocity vectors at time $t_i$ be denoted by $\hat{\mathbf{v}}(t_i)$ and $\mathbf{v}(t_i)$, respectively.

\begin{enumerate}

\item Velocity absolute error (VAE) [m/s]:
\begin{equation}
\text{VAE}_i
=
\left\|
\hat{\mathbf{v}}(t_i)-\mathbf{v}(t_i)
\right\|_2.
\end{equation}

\item Velocity mean absolute error (VMAE) [m/s]:
\begin{equation}
\text{VMAE}
=
\frac{1}{N}
\sum_{i=1}^{N}
\text{VAE}_i,
\end{equation}
where $N$ denotes the total number of samples.

\item Velocity root mean square error (VRMSE) [m/s]:
\begin{equation}
\text{VRMSE}
=
\sqrt{
\frac{1}{N}
\sum_{i=1}^{N}
(\text{VAE}_i)^2
}.
\end{equation}

\item Coefficient of determination ($R^2$):
\begin{equation}
R^2(\dot{x}_j,\hat{\dot{x}}_j)
=
1-
\frac{
\sum_{i=1}^{N}
\left(
\dot{x}_{j,i}-\hat{\dot{x}}_{j,i}
\right)^2
}
{
\sum_{i=1}^{N}
\left(
\dot{x}_{j,i}-\bar{\dot{x}}_j
\right)^2
},
\end{equation}
where $\dot{x}_{j,i}$ and $\hat{\dot{x}}_{j,i}$ denote the GT and predicted values of the $j$-th velocity component, respectively, and $\bar{\dot{x}}_j$ is the mean of the corresponding GT component.

\item Variance accounted for (VAF) [\%]:
\begin{equation}
\mathrm{VAF}(\dot{x}_j,\hat{\dot{x}}_j)
=
\left[
1-
\frac{
\mathrm{var}
\left(
\dot{x}_j-\hat{\dot{x}}_j
\right)
}
{
\mathrm{var}
\left(
\dot{x}_j
\right)
}
\right]
\times 100.
\end{equation}




\end{enumerate}

\subsection{Noise-Aware Velocity Estimation}
\label{sec:noise_resilient}

We first evaluate the effectiveness of proposed NAVi-DeepONetV1 under noisy inertial/DVL
measurements. 
The results are then compared with the model-based
LS estimator, BeamsNet~\cite{cohen2022beamsnet}, and DVL-DeepONet~\cite{SAHOO2026127766} baselines.
Table~\ref{tab:noise_resilient} summarizes the quantitative values of proposed model. 

As expected, the model-based LS estimator is considerably affected by the
measurement perturbations, yielding a VRMSE of 0.178m/s and a VMAE of
0.161m/s. LR-based models, such as BeamsNet and DVL-DeepONet, substantially improve the estimation accuracy,
reducing the VRMSE to 0.116m/s and 0.092m/s, respectively.
However, the best overall performance is obtained by the proposed NAVi-DeepONetV1. It achieves the lowest VRMSE and VMAE of 0.082m/s and
0.071m/s, respectively, while attaining the highest mean $R^2$ and VAF. Our model outperformed the LS, BeamsNet, and DVL-DeepONet baselines by 54\%, 29\%, and 11\% respectively.


These results indicate that the improvement introduced by NAVi-DeepONet is
not limited to a single error metric. The simultaneous reduction in
VRMSE and VMAE, together with the improvements in $R^2$ and VAF,
demonstrates better agreement with the underlying velocity dynamics.

\begin{table*}[!t]
\centering
\caption{Performance comparison for noise-aware velocity
estimation. }
\label{tab:noise_resilient}
\resizebox{\linewidth}{!}{
\begin{tabular}{lccccc}
\hline
\textbf{Model}
& \textbf{VRMSE [m/s]$\downarrow$}
& \textbf{VMAE [m/s]$\downarrow$}
& \textbf{Mean $R^2$$\uparrow$}
& \textbf{VAF [\%]$\uparrow$}
& \textbf{VRMSE Gain [\%]$\uparrow$} \\
\hline
LS
& 0.178 & 0.161 & 0.507 & 61.37 & 54 \\

BeamsNet
& 0.116 & 0.104 & 0.706 & 70.64 & 29 \\

DVL-DeepONet
& 0.092 & 0.074 & 0.869 & 87.21 & 11 \\

\textbf{NAVi-DeepONetV1 (ours)}
& \textbf{0.082}
& \textbf{0.071}
& \textbf{0.878}
& \textbf{88.53}
& -- \\
\hline

\end{tabular}}
\end{table*}

\subsection{DVL Measurement Recovery}
\label{sec:missing_beam}

We next investigate the performance of the proposed model under DVL
beam-outage conditions.
Table~\ref{tab:missing_beam} summarizes the performance under the loss of
two DVL beam measurements. The model-based ELC approach~\cite{cohen2024seamless} exhibits the
largest degradation, with a VRMSE of 1.991m/s and a VMAE of 1.696m/s.
Its negative mean $R^2$ and VAF further indicate poor agreement with the
reference velocity under severe measurement loss. MissBeamNet~\cite{yona2024missbeamnet} notably
improves the estimation performance, reducing the VRMSE to 0.329m/s.
However, its negative mean $R^2$ and VAF indicate limited agreement with
the reference velocity dynamics. DVL-DeepONet~\cite{SAHOO2026127766} provides a further improvement, attaining a VRMSE of
0.254m/s and a VMAE of 0.224m/s.

Furthermore, the proposed NAVi-DeepONetV2 achieves
the best overall performance, reducing VRMSE and VMAE to 0.175m/s and
0.142m/s, respectively. This corresponds to reductions of approximately
31\% in VRMSE and 37\% in VMAE relative to the next-best baseline, DVL-DeepONet. Moreover,
the mean $R^2$ increases from 0.312 to 0.694, while the VAF improves from
48\% to 72\%.

These results demonstrate that the proposed framework remains effective
under reduced DVL measurement availability.

\begin{table*}[!t]
\centering
\caption{Performance comparison under DVL measurement recovery.}
\label{tab:missing_beam}
\resizebox{\linewidth}{!}{
\begin{tabular}{lccccc}
\hline
\textbf{Model}
& \textbf{VRMSE [m/s]$\downarrow$}
& \textbf{VMAE [m/s]$\downarrow$}
& \textbf{Mean $R^2$$\uparrow$}
& \textbf{VAF [\%]$\uparrow$}
& \textbf{VRMSE Gain [\%]$\uparrow$} \\
\hline
ELC
& 1.991 & 1.696 & -134.008 & -1816.87 & 91 \\

MissBeamNet
& 0.329 & 0.288 & -0.403 & -3.36 & 47 \\

DVL-DeepONet
& 0.254 & 0.224 & 0.312 & 48.45 & 31 \\

\textbf{NAVi-DeepONetV2 (ours)}
& \textbf{0.175}
& \textbf{0.142}
& \textbf{0.694}
& \textbf{72.19}
& -- \\
\hline
\end{tabular}}
\end{table*}

\subsection{Ablation Study}
To assess the influence of different parameters and hyperparameters choices within the proposed framework, an ablation study was performed considering three aspects: (1) temporal window length, (2) effect of inertial measurements and (3) three-fold cross-validation.

\subsubsection{Effect of Temporal Window Length}
\label{sec:window_ablation}

We first investigate the influence of the temporal window length $W$ on
the performance of NAVI-DeepONetV2.
The window length determines the amount of historical DVL and inertial
information supplied to the network at each prediction step. A longer
window potentially provides richer temporal context; however, it is not practical to apply in a real-world mission.

Table~\ref{tab:window_ablation} reports the VRMSE obtained for different
window lengths. The best performance is achieved with $W=2$, resulting
in a VRMSE of 0.175m/s. For larger temporal windows, the performance
fluctuates between 0.188 and 0.207m/s, but none of these configurations
outperforms the two-step window.
 Accordingly, $W=2$ is selected for the
remaining experiments. This corresponds to a two-second window fitting to our dataset.

\begin{table}[!t]
\centering
\caption{Effect of temporal window length on NAVI-DeepONetV2.}
\label{tab:window_ablation}
\begin{tabular}{cc}
\hline
\textbf{Window Length }
& \textbf{VRMSE [m/s]$\downarrow$} \\
\hline
\textbf{2}  & \textbf{0.175} \\
3           & 0.182 \\
4           & 0.221 \\
5           & 0.197 \\
10          & 0.188 \\
15          & 0.207 \\
20          & 0.190 \\
\hline
\end{tabular}
\end{table}

\subsubsection{Effect of Inertial Data}

The effect of inertial measurement is evaluated in both noise-aware velocity estimation and DVL measurement-recovery scenarios. We evaluate the performance of our model by removing the inertial data and corresponding parameters from both the training and testing datasets. 
As shown in Table~\ref{tab:noise_resilient_1} for noise-aware velocity estimation, the proposed model achieves a 9\% improvement compared to DVL-only learning.
However, the benefit becomes more pronounced when partial DVL beams are available. As reported in Table~\ref{tab:missing_beam_2}, our model yields
a 38\% improvement. These results indicate that inertial information is
particularly beneficial when the DVL measurement geometry is degraded.

\begin{table*}[!t]
\centering
\caption{Effect of inertial information for noise-aware velocity estimation.}
\label{tab:noise_resilient_1}
\resizebox{\linewidth}{!}{
\begin{tabular}{lccccc}
\hline
\textbf{Model}
& \textbf{VRMSE [m/s]$\downarrow$}
& \textbf{VMAE [m/s]$\downarrow$}
& \textbf{Mean $R^2$$\uparrow$}
& \textbf{VAF [\%]$\uparrow$}
& \textbf{VRMSE Gain [\%]$\uparrow$} \\
\hline

NAVi-DeepONet (DVL only)
& 0.090 & 0.080 & 0.814 & 81.95 & 9 \\

\textbf{NAVi-DeepONetV1 (ours)}
& \textbf{0.082}
& \textbf{0.071}
& \textbf{0.878}
& \textbf{88.53}
& -- \\
\hline

\end{tabular}}
\end{table*}

\begin{table*}[!t]
\centering
\caption{Effect of inertial information for DVL measurement recovery.}
\label{tab:missing_beam_2}
\resizebox{\linewidth}{!}{
\begin{tabular}{lccccc}
\hline
\textbf{Model}
& \textbf{VRMSE [m/s]$\downarrow$}
& \textbf{VMAE [m/s]$\downarrow$}
& \textbf{Mean $R^2$$\uparrow$}
& \textbf{VAF [\%]$\uparrow$}
& \textbf{VRMSE Gain [\%]$\uparrow$} \\
\hline

NAVi-DeepONet (DVL only)
& 0.282 & 0.243 & 0.188 & 41.95 & 38 \\

\textbf{NAVi-DeepONetV2 (ours)}
& \textbf{0.175}
& \textbf{0.142}
& \textbf{0.694}
& \textbf{72.19}
& -- \\
\hline
\end{tabular}}
\end{table*}

\subsubsection{Three-Fold Cross Validation}
\label{sec:cross_validation}
To evaluate the generalization of the proposed
framework, a three-fold cross-validation experiment was conducted using
two independent random seeds (42 and 1234) for each fold, resulting in six
evaluations per method. 
Table~\ref{tab:multi_run} reports the resulting performance in terms of mean $\pm$ STD. 



The proposed NAVi-DeepONetV2 consistently outperforms the
baselines across the repeated evaluations. Compared with MissBeamNet, it
reduces the VRMSE by approximately 29\% and the VMAE by 31\%.
Relative to DVL-DeepONet, reductions of 9\% and 11\% are obtained
in VRMSE and VMAE, respectively. Moreover, compared with its DVL-only
counterpart, incorporating inertial information reduces the VRMSE by
approximately 20\% and the VMAE by 19\%. The proposed method also
achieves the highest mean $R^2$  and VAF, indicating
improved agreement with the reference velocity dynamics across repeated
experiments.
Additionally, the robustness of the proposed model was also evaluated under
increasing DVL beam loss, considering one, two, and three missing beams.
As shown in Figure~\ref{fig:5}, NAVi-DeepONet
consistently achieves a lower mean VRMSE than the baseline across all
three beam-outage conditions. It shows that the performance advantage becomes
more pronounced as the number of missing beams increases.

Overall, the cross-validation results demonstrate that the
performance gains of NAVi-DeepONetV2 are maintained across different
folds and random initializations rather than being restricted to a
particular run.

\begin{table*}
\centering
\caption{Robustness analysis for DVL measurement recovery over
three runs using two random seeds. }
\label{tab:multi_run}
\resizebox{\textwidth}{!}{
\begin{tabular}{lcccccc}
\hline
\textbf{Method}
& \textbf{VRMSE [m/s]$\downarrow$}
& \textbf{MAE [m/s]$\downarrow$}
& \textbf{Median Error [m/s]$\downarrow$}
& \textbf{Maximum Error [m/s]$\downarrow$}
& \textbf{$R^2$$\uparrow$}
& \textbf{VAF [\%]$\uparrow$} \\
\hline

ELC
& $1.951\pm0.130$
& $1.646\pm0.121$
& $0.921\pm0.015$
& 3.858
& -65.527
& -1099.31 \\

MissBeamNet
& $0.329\pm0.012$
& $0.279\pm0.015$
& $0.244\pm0.023$
& 1.414
& 0.110
& 27.18 \\

DVL-DeepONet
& $0.256\pm0.028$
& $0.218\pm0.025$
& $0.190\pm0.027$
& \textbf{1.064}
& 0.521
& 59.79 \\

NAVi-DeepONet (DVL only)
& $0.293\pm0.025$
& $0.239\pm0.012$
& $0.193\pm0.010$
& 1.361
& 0.416
& 49.33 \\

\textbf{NAVi-DeepONetV2 (ours)}
& $\mathbf{0.235\pm0.033}$
& $\mathbf{0.193\pm0.020}$
& $\mathbf{0.165\pm0.015}$
& 1.163
& \textbf{0.679}
& \textbf{68.85} \\
\hline
\end{tabular}}
\end{table*}


\begin{figure}
    \begin{center}
        \includegraphics[width=0.95\columnwidth]{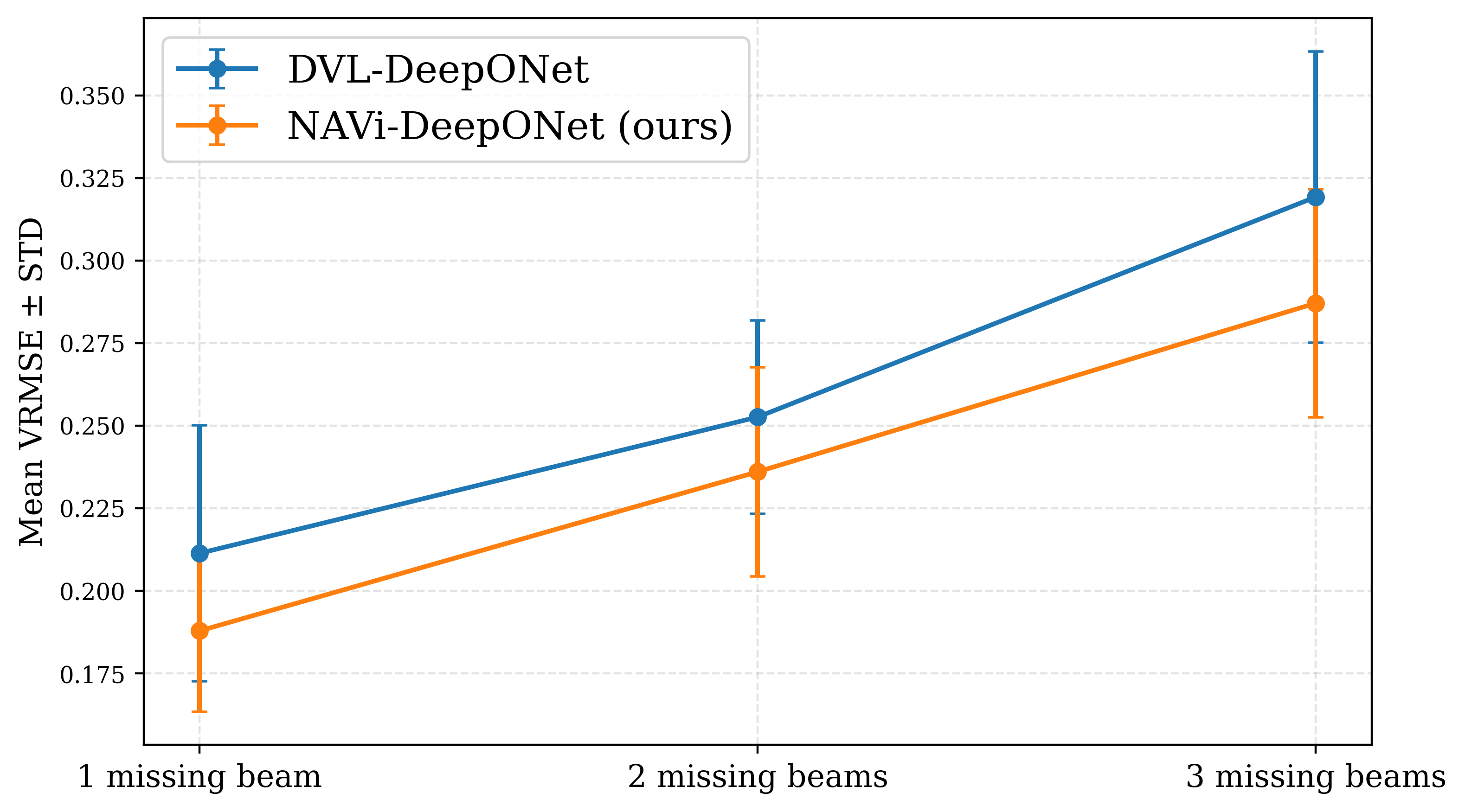}
        \caption{Velocity estimation performance under increasing DVL beam outages.}
         \label{fig:5}
    \end{center} 
\end{figure}







\subsection{Model Complexity}

The computational complexity and inference performance of the proposed NAVi-DeepONet configurations are summarized in Table~\ref{tab:complexity}. NAVi-DeepONetV2 requires approximately 75\% more parameters than NAVi-DeepONetV1 to accommodate different missing-beam scenarios. The additional parameters in NAVi-DeepONetV2 arise from the multiple branches designed to represent different beam-availability patterns. During inference, however, only the branch corresponding to the observed beam pattern is activated. So, the computational cost (MACs and
FLOPs) of NAVi-DeepONetV2 is comparable to that of the NAVi-DeepONetV1 configuration. These results demonstrate the real-time computational feasibility of the proposed framework.

\begin{table*}
\centering
\caption{Computational complexity of the proposed NAVi-DeepONet configurations.}
\label{tab:complexity}
\renewcommand{\arraystretch}{1.}
\resizebox{\textwidth}{!}{
\begin{tabular}{lcccccccc}
\hline
Configuration &
Trainable &
Model Size &
MACs &
FLOPs &
Mean &
P95 &
Throughput &
Real-Time \\
&
Parameters &
[MB] &
[M] &
[M] &
[ms] &
[ms] &
[samples/s] &
Feasible \\
\hline
NAVi-DeepONetV1 &
161,955 & 0.64 & 2.97 & 5.93 & 0.390 & 0.413 & 2564 & Yes \\

NAVi-DeepONetV2 &
675,619 & 2.72 & 2.98 & 5.96 & 0.687 & 0.722 & 14545 & Yes \\
\hline
\end{tabular}}
\end{table*}

\subsection{Summary}
\label{sec:overall_summary}
NAViLoss is incorporated into the DeepONet architecture to construct the proposed NAVi-DeepONet framework. The experimental results collectively demonstrate the effectiveness of proposed model under different scenarios, as shown in Table~\ref{tab:dvl_deeponet_summary}.

\begin{table}
\centering
\caption{Summary of the proposed model with the best VRMSE gain.}
\label{tab:dvl_deeponet_summary}
\footnotesize
\begin{tabular}{lcc}
\toprule

Variant &
\makecell{VRMSE\\{[m/s]} $\downarrow$} &
\makecell{Best Gain\\{[\%]} $\uparrow$} \\
\midrule

NAVi-DeepONetV1
& 0.082
& 54 \\

NAVi-DeepONetV2
& 0.175
& 91 \\

\bottomrule
\end{tabular}
\end{table}

For noise-aware velocity estimation, NAVi-DeepONetV1 achieves the
best overall performance among the evaluated methods. It reduces the VRMSE by
at least 11\% and VMAE by at least 4\%. The contribution of
inertial information is comparatively moderate in this setting, with a
9\% VRMSE and 11\% VMAE improvement over the DVL-only training. This suggests
that when the complete DVL geometry is available, the beam measurements
already provide strong information for three-dimensional velocity
estimation.

The advantage of the proposed framework becomes more pronounced for DVL measurement recovery scenarios, as NAVi-DeepONetV2 achieves at least 31\% and 37\% improvements in VRMSE and VMAE, respectively, relative to baselines. Furthermore, the inclusion of inertial information reduces the VRMSE by approximately
38\% and the VMAE by approximately 42\% compared with the DVL-only training with
NAVi-DeepONet configuration. The repeated cross-validation experiments
support this trend, with the proposed configuration achieves 
approximately 30\% improvement against the counterpart across
different folds and random initializations. The temporal-window ablation
further shows that a short observation window is sufficient, with
$W=2$ providing the best performance under the evaluated missing-beam
conditions.

Overall, the results indicate that the benefit of the proposed framework
depends on the information available from the DVL measurement geometry.
Under full-beam operation, NAViLoss effectively exploits the complete
beam information, whereas under reduced beam availability, its
uncertainty-aware formulation adapts the contribution of the beam-domain
information. These findings demonstrate that NAVi-DeepONet provides a
robust learning framework for accurate velocity estimation
across both complete and degraded DVL measurement conditions.

\section{Conclusion} \label{sec:conclusion}

Reliable velocity estimation is essential for underwater navigation, where DVL measurements play a key role in constraining inertial navigation errors.
For the LR-based models, regression losses primarily minimize the discrepancy between the estimated and GT velocities without explicitly considering the physics of the underlying DVL measurement geometry. This limitation becomes particularly important in the presence of noisy or missing DVL beams, where the uncertainty and amount of available measurement information can vary significantly.
To address these limitations, we introduced NAViLoss, a navigation-aware objective function specifically designed for AUV velocity estimation under both nominal and degraded DVL measurement conditions. Its bounded formulation limits the influence of large residuals, while its uncertainty-aware weighted property adapts the contribution of DVL measurement information according to the available beam geometry. The fundamental properties of NAViLoss were further established through rigorous mathematical analysis and proofs.

Later, NAViLoss is integrated with a DeepONet architecture to construct the proposed
NAVi-DeepONet framework. NAVi-DeepONet is evaluated 
using approximately 10,000m of semi-synthetic
AUV experimental data collected during multiple
real-world sea trials.
Under full-beam measurements,
NAVi-DeepONetV1 achieves a VRMSE of 0.082m/s and improves the accuracy
by an average of 31\% over the baselines. The advantage becomes
more pronounced under missing-beam conditions, where NAVi-DeepONetV2
achieves a VRMSE of 0.175m/s. This corresponds
to an average improvement of 56\% over baselines. Overall, the proposed NAVi-DeepONet model achieves an average improvement of approximately 44\%.

Nonetheless, several limitations warrant consideration. First, the
uncertainty formulation employed in this study is derived from the
available DVL beam geometry and therefore primarily reflects geometric
measurement informativeness rather than a complete probabilistic
characterization of all sensor uncertainties. Second, the robustness
and weighting parameters of NAViLoss are selected empirically for the
dataset considered in this study and may require adjustment for different
DVL configurations, sensor characteristics, or operating environments.
Furthermore, although the proposed framework is evaluated across multiple
data partitions and random initializations, its generalizability across
different AUV platforms, DVL geometries, seabed conditions, and acoustic
environments remains to be investigated. Future work will therefore
consider richer uncertainty
models incorporating sensor statistics, and broader cross-platform and
cross-environment validation.

To conclude, NAViLoss provides a navigation-aware learning objective that
combines (1) robust velocity estimation, (2) DVL measurement consistency, (3)
adaptive uncertainty handling, and (4) limit the influence of large residuals within a unified formulation. When
integrated with NAVi-DeepONet, it enables accurate and resilient velocity
estimation across both nominal and degraded DVL measurement conditions. Our model is useful for LR-based underwater navigation.

\section*{Conflict of Interest Statement}
\noindent The authors confirm that they have no conflicts of interest related to this paper.

\section*{Funding Declaration}
\noindent The authors confirm that they did not receive any funding to carry out this work.

\section*{Data Availability}
\noindent The dataset used in this study is publicly available at: 
\url{https://github.com/ansfl/A-KIT/tree/main}.

\section*{Code Availability}

\noindent The source code associated with this study is publicly available at: \url{https://github.com/ansfl/NAViLoss}.

\appendix

\renewcommand{\thesection}{Appendix \Alph{section}}
\section{}\label{Apndx}

\appendix

\section{Proof of Proposition~\ref{prop:naviloss_properties}}
\label{app:proof_naviloss}

\begin{proof}
Since $\|\mathbf r_v\|^2,\|\mathbf r_b\|^2\geq0$ and the corresponding
uncertainty denominators are strictly positive, it follows directly from
\eqref{eq:velocity_naviloss} and \eqref{eq:beam_naviloss} that
\begin{equation}
0\leq\mathcal L_v<\frac{1}{\lambda},
\qquad
0\leq\mathcal L_b<\frac{1}{\lambda}.
\end{equation}
Therefore,
\begin{equation}
0\leq
\mathcal L_{\mathrm{NAVi}}
<
\frac{1+\lambda_b}{\lambda}.
\end{equation}

Furthermore, since $\sigma_v^2+\varepsilon>0$ and
$\sigma_b^2+\varepsilon>0$, all denominators remain strictly positive
for every finite $\mathbf r_v$ and $\mathbf r_b$. Hence,
$\mathcal L_v$ and $\mathcal L_b$ have no singularities and are
infinitely differentiable with respect to their corresponding residuals.
Consequently, their weighted sum $\mathcal L_{\mathrm{NAVi}}$ is also
smooth with respect to $(\mathbf r_v,\mathbf r_b)$, i.e.,
\begin{equation}
\mathcal L_{\mathrm{NAVi}}\in C^\infty.
\end{equation}

\noindent
Finally, as $\|\mathbf r_v\|\rightarrow\infty$ and
$\|\mathbf r_b\|\rightarrow\infty$,
\begin{equation}
\mathcal L_v\rightarrow\frac{1}{\lambda},
\qquad
\mathcal L_b\rightarrow\frac{1}{\lambda}.
\end{equation}
Thus,
\begin{equation}
\lim_{\substack{\|\mathbf r_v\|\rightarrow\infty\\
                 \|\mathbf r_b\|\rightarrow\infty}}
\mathcal L_{\mathrm{NAVi}}
=
\frac{1+\lambda_b}{\lambda},
\end{equation}
which completes the proof.
\end{proof}

\subsection{Proof of Lemma~\ref{lem:robustness}}
\begin{proof}
Let
\begin{equation}
\rho(r)
=
\frac{1}{\lambda}
\left(
1-
\frac{1}
{1+\lambda\dfrac{r^2}{\sigma^2+\varepsilon}}
\right),
\end{equation}
where $\lambda>0$, $\sigma>0$, and $\varepsilon>0$.

As $|r|\rightarrow\infty$,
\begin{equation}
\lambda\frac{r^2}{\sigma^2+\varepsilon}
\rightarrow\infty,
\end{equation}
and therefore
\begin{equation}
\lim_{|r|\rightarrow\infty}
\frac{1}
{1+\lambda\dfrac{r^2}{\sigma^2+\varepsilon}}
=0.
\end{equation}
Hence,
\begin{equation}
\lim_{|r|\rightarrow\infty}\rho(r)
=
\frac{1}{\lambda}.
\end{equation}

\noindent
Differentiating $\rho(r)$ with respect to $r$ gives
\begin{equation}
\rho'(r)
=
\frac{2r}
{\sigma^2+\varepsilon}
\left(
1+
\lambda\frac{r^2}{\sigma^2+\varepsilon}
\right)^{-2}.
\label{eq:rho_derivative}
\end{equation}

\noindent
For large $|r|$, the denominator grows as $r^4$, whereas the numerator
grows linearly with $r$. Consequently,
\begin{equation}
\lim_{|r|\rightarrow\infty}\rho'(r)=0.
\end{equation}

\noindent Thus, the penalty approaches the finite upper bound $1/\lambda$, while the gradient contribution of the residual to the optimization vanishes for increasingly large residuals. This establishes the bounded and redescending behavior stated in Lemma~\ref{lem:robustness}.

\end{proof}

\subsection{Proof of Lemma~\ref{lem:small_residual}}

\begin{proof}
Let
\begin{equation}
x
=
\lambda\frac{r^2}{\sigma^2+\varepsilon}.
\end{equation}
As $r\rightarrow0$, we have $x\rightarrow0$. Using the expansion
\begin{equation}
\frac{1}{1+x}
=
1-x+x^2+O(x^3),
\qquad x\rightarrow0,
\end{equation}
the penalty can be written as
\begin{align}
\rho(r)
&=
\frac{1}{\lambda}
\left[
1-\left(1-x+x^2+O(x^3)\right)
\right] \\
&=
\frac{1}{\lambda}
\left[
x-x^2+O(x^3)
\right].
\end{align}

\noindent
Substituting
$
x=\lambda r^2/(\sigma^2+\varepsilon)
$
yields
\begin{equation}
\rho(r)
=
\frac{r^2}{\sigma^2+\varepsilon}
-
\lambda
\frac{r^4}{(\sigma^2+\varepsilon)^2}
+
O(r^6).
\end{equation}
Therefore,
\begin{equation}
\rho(r)
=
\frac{r^2}{\sigma^2+\varepsilon}
+
O(r^4),
\qquad r\rightarrow0,
\end{equation}
which establishes the locally quadratic behavior stated in
Lemma~\ref{lem:small_residual}.
\end{proof}

\subsection{Proof of Lemma~\ref{lem:uncertainty_attenuation}}

\begin{proof}
For a fixed $r\neq0$, the penalty is given by
\begin{equation}
\rho(r;\sigma)
=
\frac{1}{\lambda}
\left(
1-
\frac{1}{
1+\lambda\dfrac{r^2}{\sigma^2+\varepsilon}
}
\right).
\end{equation}

\noindent
Differentiating with respect to $\sigma$ yields
\begin{equation}
\frac{\partial \rho(r;\sigma)}{\partial \sigma}
=
-
\frac{
2\sigma r^2
}{
(\sigma^2+\varepsilon)^2
\left(
1+
\lambda\dfrac{r^2}{\sigma^2+\varepsilon}
\right)^2
}.
\label{eq:uncertainty_derivative}
\end{equation}

\noindent
Since $\sigma>0$, $r\neq0$, and $\varepsilon>0$, all terms in the
denominator of \eqref{eq:uncertainty_derivative} are strictly positive,
and $2\sigma r^2>0$. Therefore,
\begin{equation}
\frac{\partial \rho(r;\sigma)}{\partial \sigma}<0.
\end{equation}

\noindent
Hence, $\rho(r;\sigma)$ is strictly decreasing with respect to $\sigma$ for any fixed $r\neq0$. Equivalently, for any $0<\sigma_1<\sigma_2$,
\begin{equation}
\rho(r;\sigma_2)<\rho(r;\sigma_1).
\end{equation}
Thus, increasing the uncertainty parameter attenuates the penalty assigned to a fixed nonzero residual, which proves the result.
\end{proof}

\subsection{Proof of Proposition~\ref{prop:dual_consistency}}

\begin{proof}
From the definition of NAViLoss,
\begin{equation}
\mathcal L_{\mathrm{NAVi}}
=
\mathcal L_v+\lambda_b\mathcal L_b,
\end{equation}
where
\(
\mathcal L_v\geq0
\),
\(
\mathcal L_b\geq0
\), and
\(
\lambda_b>0
\).
Therefore, the condition
\begin{equation}
\mathcal L_{\mathrm{NAVi}}=0
\end{equation}
can hold only if
\begin{equation}
\mathcal L_v=0,
\qquad
\mathcal L_b=0.
\end{equation}

\noindent
From \eqref{eq:velocity_naviloss} and \eqref{eq:beam_naviloss}, each
component loss vanishes if and only if its corresponding residual is
zero. Hence,
\begin{equation} \label{eq:89}
\mathbf r_v=\mathbf 0,
\qquad
\mathbf r_b=\mathbf 0.
\end{equation}

\noindent Using \eqref{eq:velocity_residual}, \eqref{eq:beam_residual}, \eqref{eq:predicted_dvl_model} and \eqref{eq:89} we get
\begin{equation}
\widehat{\mathbf v}-\mathbf v=\mathbf 0,
\qquad
H\widehat{\mathbf v}-\mathbf b=\mathbf 0,
\end{equation}
which yields
\begin{equation}
\widehat{\mathbf v}=\mathbf v,
\qquad
H\widehat{\mathbf v}=\mathbf b.
\end{equation}
This proves the result.
\end{proof}

\vspace{0.2cm}

\bibliographystyle{elsarticle-num}
\bibliography{cas-refs}

\end{document}